\documentclass{article}

\usepackage[margin=1in]{geometry}

\usepackage[utf8]{inputenc} 
\usepackage[T1]{fontenc}    
\usepackage{hyperref}       
\usepackage{xurl}            
\usepackage{booktabs}       
\usepackage{amsfonts}       
\usepackage{nicefrac}       
\usepackage{microtype}      

\usepackage{amsmath,amssymb,amsfonts,amsthm}
\usepackage{enumitem}
\usepackage{subcaption}
\usepackage{graphicx}
\usepackage{xcolor}
\usepackage{comment}
\usepackage{algorithmicx}
\usepackage{algpseudocode}
\usepackage{algorithm}
\usepackage{cite}

\setlist[itemize]{leftmargin=*}
\setlist[enumerate]{leftmargin=*}

\newtheorem{thm}{Theorem}

\newtheorem{cor}{Corollary}
\newtheorem{defn}{Definition}
\newtheorem{lemma}{Lemma}

\newcommand{\adrev}{r}
\newcommand{\adpap}{p}
\newcommand{\numrev}{n}
\newcommand{\numpap}{d}
\newcommand{\revset}{\mathcal{R}}
\newcommand{\papset}{\mathcal{P}}
\newcommand{\revload}{k}
\newcommand{\papload}{\ell}
\newcommand{\simmat}{S}
\newcommand{\intas}{M}
\newcommand{\fracas}{F}
\newcommand{\maxprob}{Q}
\newcommand{\badprob}{W}
\newcommand{\maxbadprob}{\lambda}
\newcommand{\maxprobtens}{T}
\newcommand{\fracastens}{G}
\newcommand{\intastens}{M}
\newcommand{\adinst}{I}
\newcommand{\numinst}{m}
\newcommand{\maxprobconst}{q_0}
\newcommand{\groupsize}{g}
\newcommand{\capac}{h}
\newcommand{\muceil}{\lceil \mu \rceil}
\newcommand{\mufloor}{\lfloor \mu \rfloor}
\newcommand{\bidscale}{\gamma}

\DeclareMathOperator*{\argmax}{arg\,max}

\title{Mitigating Manipulation in Peer Review \\ via Randomized Reviewer Assignments}

\author{%
    Steven Jecmen \\
    Carnegie Mellon University \\
    \texttt{sjecmen@cs.cmu.edu} \\
    \and
    Hanrui Zhang \\
    Duke University \\
    \texttt{hrzhang@cs.duke.edu} \\
    \and
    Ryan Liu \\
    Carnegie Mellon University \\
    \texttt{ryanliu@andrew.cmu.edu} \\
    \and
    Nihar B. Shah \\
    Carnegie Mellon University \\
    \texttt{nihars@cs.cmu.edu} \\
    \and
    Vincent Conitzer \\
    Duke University \\
    \texttt{conitzer@cs.duke.edu} \\
    \and   
    Fei Fang \\
    Carnegie Mellon University \\
    \texttt{feif@cs.cmu.edu} \\
}
\date{} 

\begin{document}

\maketitle

\begin{abstract}
We consider three important challenges in conference peer review: (i) reviewers maliciously attempting to get assigned to certain papers to provide positive reviews, possibly as part of quid-pro-quo arrangements with the authors; (ii) ``torpedo reviewing,'' where reviewers deliberately attempt to get assigned to certain papers that they dislike in order to reject them; (iii) reviewer de-anonymization on release of the similarities and the reviewer-assignment code. On the conceptual front, we identify connections between these three problems and present a framework that brings all these challenges under a common umbrella. We then present a (randomized) algorithm for reviewer assignment that can optimally solve the reviewer-assignment problem under any given constraints on the probability of assignment for any reviewer-paper pair. We further consider the problem of restricting the joint probability that certain suspect pairs of reviewers are assigned to certain papers, and show that this problem is NP-hard for arbitrary constraints on these joint probabilities but efficiently solvable for a practical special case. Finally, we experimentally evaluate our algorithms on datasets from past conferences, where we observe that they can limit the chance that any malicious reviewer gets assigned to their desired paper to $50\%$ while producing assignments with over $90\%$ of the total optimal similarity. Our algorithms still achieve this similarity while also preventing reviewers with close associations from being assigned to the same paper. 
\end{abstract}

\section{Introduction}

Peer review, the evaluation of work by others working in the same field as the producer of the work or with similar competencies, is a critical component of scientific research. It is regarded favorably by a significant majority of researchers and is seen as being essential to both improving the quality of published research and validating the legitimacy of research publications \cite{mulligan2013peer, nicholas2015peer, ware2008peer}. Due to the wide adoption of peer review in the publication process in academia, the peer-review process can be very high-stakes for authors, and the integrity of the process can significantly influence the careers of the authors (especially due to the prominence of a ``rich get richer'' effect in academia \cite{merton1968matthew}). 

However, there are several challenges that arise in peer review relating to the integrity of the review process. In this work, we address three such challenges for peer review in academic conferences where a number of papers need to be assigned to reviewers at the same time.

\paragraph{(1) Untruthful favorable reviews.} In order to achieve a good reviewer assignment, peer review systems must solicit some information about their reviewers' knowledge and interests. This inherently presents opportunities for manipulation, since reviewers can lie about their interests and expertise. For example, reviewers often are expected to bid on the papers they are interested in reviewing before an assignment algorithm is run to determine the paper assignment.
This system can be manipulated, and in fact this is known to have happened in at least one ACM conference~\cite{Vijaykumar_2020}:
\begin{quote}
``{\it Another SIG community has had a collusion problem where the investigators found that a group of PC members and authors colluded to bid and push for each other’s papers violating the usual conflict-of-interest rules.}''
\end{quote}
The problem of manipulation is not limited to the bidding system, as practically anything used to determine paper assignments (e.g., self-reported area of expertise, list of papers the reviewer has published) can potentially be manipulated; in more extreme cases, authors have been known to entirely falsify reviewer identities to get a desired reviewer assignment \cite{ferguson2014publishing, gao2017retractions}. 
In some cases, unethical authors may enter into deals with potential reviewers for their paper, where the reviewer agrees to attempt to get assigned to the author's paper and give it a favorable review in exchange for some outside reward (e.g., as part of a quid-pro-quo arrangement for the reviewer's own paper in another publication venue). 
To preserve the integrity of the reviewing process and maintain community trust, the paper assignment algorithm should guarantee the mitigation of these kinds of arrangements. 
    
\paragraph{(2) Torpedo reviewing.} In ``torpedo reviewing,'' unethical reviewers attempt to get assigned to papers they dislike with the intent of giving them an overly negative review and blocking the paper from publication. 
This can have wide-reaching consequences \cite{langford_2008}: 
\begin{quote}
``{\it If a research direction is controversial in the sense that just 2-or-3 out of hundreds of reviewers object to it, those 2 or 3 people can bid for the paper, give it terrible reviews, and prevent publication. Repeated indefinitely, this gives the power to kill off new lines of research to the 2 or 3 most close-minded members of a community, potentially substantially retarding progress for the community as a whole.}''
\end{quote} 
One special case of torpedo reviewing has been called ``rational cheating,'' referring to reviewers negatively reviewing papers that compete with their own authored work \cite{barroga2014safeguarding, paolucci2014mechanism}. The high-stakes atmosphere of academic publishing can exacerbate this problem \cite{akst2010hate}: 
\begin{quote}
``{\it The cutthroat attitude that pervades the system results in ludicrous rejections for personal reasons---if the reviewer feels that the paper threatens his or her own research or contradicts his or her beliefs, for example.}''
\end{quote}
A paper assignment algorithm should guarantee to authors that their papers are unlikely to have been torpedo-reviewed.

\paragraph{(3) Reviewer de-anonymization in releasing assignment data.} For transparency and research purposes, conferences may wish to release the paper-reviewer similarities and the paper assignment algorithm used after the conference. However, if the assignment algorithm is deterministic, this would allow for authors to fully determine who reviewed their paper, breaking the anonymity of the reviewing process. 
Even when reviewer and paper names are removed, identities can still be discovered (as in the case of the Netflix Prize dataset \cite{narayanan2006break}). Consequently, a rigorous guarantee of anonymity is needed in order to release the data. 


~\\ Although these challenges may seem disparate, we address all of them under a common umbrella  framework. Our contributions are as follows:
\begin{itemize} 
    \item {\bf Conceptual:} We formulate problems concerning the three aforementioned issues in peer review, and propose a framework to address them through the use of randomized paper assignments (Section~\ref{secproblem}). 
    \item {\bf Theoretical:} We design computationally efficient, randomized assignment algorithms that optimally assign reviewers to papers subject to given restrictions on the probability of assigning any particular reviewer-paper pair (Section~\ref{secbasicalgo}). We further consider the more complex case of preventing suspicious {\em pairs} of reviewers from being assigned to the same paper (Section~\ref{secinstalgo}). 
    We show that finding the optimal assignment subject to arbitrary constraints on the probabilities of reviewer-reviewer-paper assignments is NP-hard. In the practical special case where the program chairs want to prevent pairs of reviewers within the same subset of some partition of the reviewer set (for example, reviewers at the same academic institution or with the same geographical area of residence) 
    from being assigned to the same paper, we present an algorithm that finds the optimal randomized assignment with this guarantee.
    \item {\bf Empirical:} We test our algorithms on datasets from past conferences and show their practical effectiveness (Section~\ref{secexps}). 
    As a representative example, on data reconstructed from ICLR 2018, our algorithms can limit the chance of any reviewer-paper assignment to $50\%$ while achieving $90.8\%$ of the optimal total similarity. Our algorithms can continue to achieve this similarity while also preventing reviewers with close associations from being assigned to the same paper. We further demonstrate, using the ICLR 2018 dataset, that our algorithm successfully prevents manipulation of the assignment by a simulated malicious reviewer.  
\end{itemize}

All of the code for our algorithms and our empirical results is freely available online.\footnote{\url{https://github.com/theryanl/mitigating_manipulation_via_randomized_reviewer_assignment/}}

\section{Related Literature}

Many paper assignment algorithms for conference peer review have been proposed in past work. The widely-used Toronto Paper Matching System (TPMS) \cite{charlin2013toronto} computes a similarity score for each reviewer-paper pair based on analysis of the reviewers' past work and bids, and then aims to maximize the total similarity of the resulting assignment. 
The framework of ``compute similarities and maximize total similarity'' (and similar variants) encompasses many paper assignment algorithms, where similarities can be computed in various ways from automated and manual analysis and reviewer bids \cite{charlin2012framework, long2013good, goldsmith2007ai, tang2010expertise, flach2010novel,lian2018conference,kobren19localfairness}. We treat the bidding process and computation of similarities as given, and focus primarily on adjusting the optimization problem to address the three aforementioned challenges. Some work has considered other optimization objectives such as fairness \cite{Garg2010papers,stelmakh2018peerreview4all}. We also consider a similar fairness objective in a variant of our algorithm. On a related front, there are also a number of recent works~\cite{ge13bias,noothigattu2018choosing,wang2018your,fiez2020super,roos2011calibrate,stelmakh2019testing,nips14experiment, shah2018design, tomkins17wsdm,kang18peerread,manzoor2020uncovering,stelmakh2020resubmissions,stelmakh2020catch} which deal with various other aspects of peer review.

Much prior work has studied the issue of preventing or mitigating strategic behavior in peer review. This work usually focuses on the incentives reviewers have to give poor reviews to other papers in the hopes of increasing their own paper's chances of acceptance \cite{xu2018strategyproof, aziz2019strategyproof, kurokawa2015impartial, kahng2018ranking, holzman2013impartial}. 
Unlike the issues we deal with in this paper, these works consider only reviewers' incentives to get \emph{their own paper accepted} and not other possible incentives. 
We instead consider arbitrary incentives for a reviewer to give an untruthful review, such as a personal dislike for a research area or misincentives brought about by author-reviewer collusion. Instead of aiming to remove reviewer incentives to write untruthful reviews, our work focuses on mitigating the effectiveness of manipulating the reviewer assignment process. 


A concurrent work~\cite{ding2020privacy} considers a different set of problems in releasing data in peer review while preserving reviewer anonymity. The data to be released here are some function of the scores and the reviewer-assignment, whereas we look to release the similarities and the assignment code. Moreover, the approach and techniques in~\cite{ding2020privacy} are markedly different---they consider post-processing the data for release using techniques such as differential privacy, whereas we consider randomizing the assignment for plausible deniability. 

Outside of peer review, there is a line of prior work focused on detecting and mitigating manipulation in online reviews (such as those on Yelp or Amazon). These works typically make assumptions that are not applicable or require data that is not available in our setting. Several of these works analyze the graph of user reviews in order to detect fraudulent reviewers \cite{akoglu2015graph}. For example, \cite{hooi2016fraudar} detects fraud from the review graph by assuming that products are trying to maximize the number of positive reviews they get, whereas in our setting it is important to mitigate the effectiveness of just a single author-paper collusion since the number of reviews per paper is fixed and small. Additionally, some of these works \cite{kumar2018rev2, wang2011review, akoglu2013opinion} are based on estimating the ``true quality'' of each item from the review graph, which is not possible in peer review since paper evaluations are subjective. Another direction of prior work uses machine learning to detect malicious behavior. Some research \cite{yang2016re} detects fraud from reviewer statistics such as rating variance or number of ratings, but in the peer review setting, there is so little data for each reviewer that such features would be highly noisy or completely uninformative.  Other works \cite{ott2011finding} attempt to detect malicious reviews from the review text, but in our setting, malicious reviews may not differ stylistically from genuine reviews. Finally, some work in recommender systems focuses on making product recommendations resistant to malicious reviews \cite{si2020shilling}; however, in peer review, the paper acceptance process must be done by hand since it must take into account reviewer opinions, arguments, and the reviewer discussion.

Randomized assignments have been used to address the problem of fair division of indivisible goods such as jobs or courses \cite{hylland1979efficient, bogomolnaia2001new}, as well as in the context of Stackelberg security games \cite{korzhyk2010complexity}. The paper \cite{wang2018your} uses randomization to address the issue of miscalibration in ratings, such as those given to papers in peer review. To the best of our knowledge, the use of randomized reviewer-paper assignments to address the issues of malicious reviewers or reviewer de-anonymization in peer review has not been studied previously.
Work on randomized assignments often references the well-known Birkhoff-von Neumann theorem \cite{birkhoff1946three, von1953certain} or a generalization in order to demonstrate how to implement a randomized assignment as a lottery over deterministic assignments. The paper \cite{Budish2009IMPLEMENTINGRA} proposes a broad generalization of the Birkhoff-von Neumann theorem that we use in our work.

\section{Background and Problem Statements} \label{secproblem}
We first define the standard paper assignment problem, followed by our problem setting.
In the standard paper assignment setting, we are given a set $\revset$ of $\numrev$ reviewers and a set $\papset$ of $\numpap$ papers, along with desired reviewer load $\revload$ (that is, the maximum number of papers any reviewer should be assigned) and desired paper load $\papload$ (that is, the exact number of reviewers any paper should be assigned to).\footnote{For ease of exposition, we assume that all reviewer and paper loads are equal. In practice, program chairs may want to set different loads for different reviewers or papers; all of our algorithms and guarantees still hold for this case (as does our code).} An assignment of papers to reviewers is a bipartite matching between the sets that obeys the load constraints on all reviewers and papers. In addition, we are given a similarity matrix $\simmat \in \mathbb{R}^{\numrev \times \numpap}$ where $\simmat_{\adrev\adpap}$ denotes how good of a match reviewer $\adrev$ is for paper $\adpap$. These similarities can be derived from the reviewers' bids on papers, prior publications, conflicts of interest, etc.

The standard problem of finding a maximum sum-similarity assignment \cite{charlin2013toronto, charlin2012framework, goldsmith2007ai, flach2010novel, kobren19localfairness} is then written as an integer linear program. The decision variables $\intas \in \{0, 1\}^{\numrev \times \numpap}$ specify the assignment, where $\intas_{\adrev\adpap} = 1$ if and only if reviewer $\adrev$ is assigned to paper $\adpap$. The objective is to maximize $\sum_{\adrev \in \revset} \sum_{\adpap \in \papset} \simmat_{\adrev\adpap} \intas_{\adrev\adpap}$ subject to the load constraints $\sum_{\adpap \in \papset} \intas_{\adrev\adpap} \leq \revload, \forall \adrev \in \revset$ and $\sum_{\adrev \in \revset} \intas_{\adrev\adpap} = \papload, \forall \adpap \in \papset$.
Since the constraint matrix of the linear program (LP) relaxation of this problem is totally unimodular, the solution to the LP relaxation will be integral and so this problem can be solved as an LP. This method of assigning papers has been used by various conferences such as NeurIPS, ICML, ICCV, and SIGKDD (among others) \cite{charlin2013toronto, flach2010novel}, as well as by popular conference management systems EasyChair (\url{easychair.org}) and HotCRP (\url{hotcrp.com}).

Now, suppose there exists a reviewer who wishes to get assigned to a specific paper for some malicious reason and manipulates their similarities in order to do so. 
When the assignment algorithm is deterministic, as in previous work~\cite{charlin2013toronto, charlin2012framework, goldsmith2007ai, flach2010novel, kobren19localfairness, tang2010expertise}, a malicious reviewer who knows the algorithm may be able to effectively manipulate it in order to get assigned to the desired paper.
To address this issue, we aim to provide a guarantee that regardless of the reviewer bids and similarities, this reviewer-paper pair has only a limited probability of being assigned. 

Consider now the challenge of preserving anonymity in releasing conference data. If a conference releases its similarity matrix and its deterministic assignment algorithm, then anyone could reconstruct the full paper assignment. 
Interestingly, this problem can be solved in the same way as the malicious reviewer problems described above. If the assignment algorithm provides a guarantee that each reviewer-paper pair has only a limited probability of being assigned, then no reviewer's identity can be discovered with certainty.

With this motivation, we now consider $\intas$ as stochastic and aim to find a {\it randomized assignment}, a probability distribution over deterministic assignments. This naturally leads to the following problem formulation.
\begin{defn}[Pairwise-Constrained Problem] \label{defnpairwise}
The input to the problem is a similarity matrix $\simmat$ and a matrix $\maxprob \in [0, 1]^{\numrev \times \numpap}$. The goal is to find a randomized assignment of papers to reviewers (i.e., a distribution of $\intas$) that maximizes $\mathbb{E}\left[\sum_{\adrev \in \revset} \sum_{\adpap \in \papset} \simmat_{\adrev\adpap} \intas_{\adrev\adpap}\right]$ subject to the constraints $\mathbb{P}[\intas_{\adrev\adpap} = 1] \leq \maxprob_{\adrev\adpap}, \forall \adrev \in \revset, \adpap \in \papset$.
\end{defn}
Since a randomized assignment is a distribution over deterministic assignments, all assignments $\intas$ in the support of the randomized assignment must still obey the load constraints $\sum_{\adpap \in \papset} \intas_{\adrev\adpap} \leq \revload, \forall \adrev \in \revset$ and $\sum_{\adrev \in \revset} \intas_{\adrev\adpap} = \papload, \forall \adpap \in \papset$. The optimization objective is the expected sum-similarity across all paper-reviewer pairs, the natural analogue of the deterministic sum-similarity objective. In practice, the matrix $\maxprob$ is provided by the program chairs of the conference; all entries can be set to a constant value if the chairs have no special prior information about any particular reviewer-paper pair.

\paragraph{}
To prevent dishonest reviews of papers, program chairs may want to do more than just control the probability of individual paper-reviewer pairs. For example, suppose that we have three reviewers assigned per paper (a very common arrangement in computer science conferences). We might not be particularly concerned about preventing any single reviewer from being assigned to some paper, since even if that reviewer dishonestly reviews the paper, there are likely two other honest reviewers who can overrule the dishonest one. However, it would be much worse if we have two reviewers dishonestly reviewing the same paper, since they could likely overrule the sole honest reviewer.

A second issue is that there may be dependencies within certain pairs of reviewers that cannot be accurately represented by constraints on individual reviewer-paper pairs.  For example, we may have two reviewers $a$ and $b$ who are close collaborators, each of which we are not individually very concerned about assigning to paper $\adpap$. However, we may believe that in the case where reviewer $a$ has entered into a quid-pro-quo deal to dishonestly review paper $\adpap$, reviewer $b$ is likely to also be involved in the same deal. Therefore, one may want to strictly limit the probability that {\bf both} reviewers $a$ and $b$ are assigned to paper $\adpap$, regardless of the limits on the probability that either reviewer individually is assigned to paper $\adpap$.

With this motivation, we define the following generalization of the Pairwise-Constrained Problem.
\begin{defn}[Triplet-Constrained Problem] \label{defntriplet}
The input to the problem is a similarity matrix $\simmat$, a matrix $\maxprob \in [0, 1]^{\numrev \times \numpap}$, and a $3$-dimensional tensor $\maxprobtens \in [0, 1]^{\numrev \times \numrev \times \numpap}$. The goal is to find a randomized assignment of papers to reviewers that maximizes $\mathbb{E}\left[\sum_{\adrev \in \revset} \sum_{\adpap \in \papset} \simmat_{\adrev\adpap} \intas_{\adrev\adpap}\right]$ subject to the constraints $\mathbb{P}[\intas_{\adrev\adpap} = 1] \leq \maxprob_{\adrev\adpap}, \forall \adrev \in \revset, \adpap \in \papset$ and $\mathbb{P}[\intas_{a\adpap} = 1 \land \intas_{b\adpap} = 1 ] \leq \maxprobtens_{ab\adpap}, \forall a, b \in \revset \text{ s.t. } a \neq b, \adpap \in \papset$.
\end{defn}

\paragraph{}
The randomized assignments that solve these problems can be used to address all three challenges we identified earlier:
\begin{itemize}
    \item {\bf Untruthful favorable reviews:} By guaranteeing a limit on the probability that any malicious reviewer or any malicious pairs of reviewers can be assigned to the paper they want, we mitigate the effectiveness of any unethical deals between reviewers and authors by capping the probability that such a deal can be upheld. This guarantee holds regardless of how extreme a reviewers' manipulation of the assignment is and without any assumptions on reviewers' exact incentives. The entries of $\maxprob$ can be set by the program chairs based on their assessment of the risk of allowing the corresponding reviewer-paper pair; for example, an entry can be set low if the reviewer and author have collaborated in the past. The entries of $\maxprobtens$ can be set similarly based on known associations between reviewers.
    \item {\bf Torpedo reviewing:} By limiting the probability that any reviewer or pair of reviewers can be assigned to a paper they wish to torpedo, we make it much more difficult for a small group of reviewers to shut down a new research direction or to take out competing papers.
    \item {\bf Reviewer de-anonymization in releasing assignment data:} To allow for the release of similarities and the assignment algorithm after a conference, all of the entries in $\maxprob$ can simply be set to some reasonable constant value. Even if reviewer and paper names are fully identified through analysis of the similarities, only the distribution over assignments can be recovered and not the specific assignment that was actually used. This guarantees that for each paper, no reviewer's identity can be identified with high confidence, since every reviewer has only a limited chance to be assigned to that paper. 
\end{itemize}

In Sections~\ref{secbasicalgo} and \ref{secinstalgo}, we consider the Pairwise-Constrained Problem and Triplet-Constrained Problem respectively. We also consider several related problems in the appendices.
\begin{itemize}
    \item We extend our results to an objective based on \emph{fairness}, which we call the stochastic fairness objective, in Appendix~\ref{fairnessapdx}. Following the max-min fairness concept, we aim to maximize the minimum expected similarity assigned to any paper under the randomized assignment: $\min_{\adpap \in \papset} \mathbb{E}\left[ \sum_{\adrev \in \revset} \simmat_{\adrev\adpap} \intas_{\adrev\adpap}\right]$. We present a version of the Pairwise-Constrained Problem using this objective and an algorithm to solve it, as well as experimental results. 
    \item We address an alternate version of the Pairwise-Constrained Problem in Appendix~\ref{badmatchapdx} which uses the probabilities with which any reviewer may intend to untruthfully review any paper, along with other problems using these probabilities.
\end{itemize}

\section{Randomized Assignment with Reviewer-Paper Constraints} \label{secbasicalgo}

In this section we present our main algorithm to solve the Pairwise-Constrained Problem (Definition~\ref{defnpairwise}), thereby addressing the challenges identified earlier. Before delving into the details of the algorithm, the following theorem states our main result. 
\begin{thm}\label{bvnthm}
There exists an algorithm which returns an optimal solution to the Pairwise-Constrained Problem in $poly(\numrev, \numpap)$ time.
\end{thm}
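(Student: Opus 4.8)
The plan is to decouple the combinatorial object (a distribution over integral assignments) from its first moments by working with the matrix of marginal assignment probabilities, and then to recover an actual randomized assignment via a Birkhoff--von Neumann-type decomposition. Write $\fracas_{\adrev\adpap} = \mathbb{P}[\intas_{\adrev\adpap}=1]$ for the marginal probability that reviewer $\adrev$ reviews paper $\adpap$. By linearity of expectation the objective equals $\sum_{\adrev,\adpap} \simmat_{\adrev\adpap}\fracas_{\adrev\adpap}$, and the pairwise constraint $\mathbb{P}[\intas_{\adrev\adpap}=1]\le\maxprob_{\adrev\adpap}$ becomes the linear constraint $\fracas_{\adrev\adpap}\le\maxprob_{\adrev\adpap}$.

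First I would solve the fractional relaxation: maximize $\sum_{\adrev,\adpap}\simmat_{\adrev\adpap}\fracas_{\adrev\adpap}$ over $\fracas\in[0,1]^{\numrev\times\numpap}$ subject to $\fracas_{\adrev\adpap}\le\maxprob_{\adrev\adpap}$, the reviewer-load constraints $\sum_{\adpap}\fracas_{\adrev\adpap}\le\revload$, and the paper-load constraints $\sum_{\adrev}\fracas_{\adrev\adpap}=\papload$. This is a linear program with $\numrev\numpap$ variables and $O(\numrev\numpap+\numrev+\numpap)$ constraints, solvable in $poly(\numrev,\numpap)$ time; call its optimum $\fracas^{*}$.

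The crux is to realize $\fracas^{*}$ as the marginal matrix of a genuine distribution over integral, load-feasible assignments. Here I would appeal to the bihierarchy generalization of Birkhoff--von Neumann from \cite{Budish2009IMPLEMENTINGRA}. The key observation is that the constraints I need the integral assignments to satisfy---entrywise bounds $0\le\intas_{\adrev\adpap}\le1$, reviewer loads $\sum_\adpap\intas_{\adrev\adpap}\le\revload$, and paper loads $\sum_\adrev\intas_{\adrev\adpap}=\papload$---all have integer right-hand sides, and the corresponding family of constrained cell-sets forms a bihierarchy: the rows (together with the singleton cells, which are laminar with everything) form one hierarchy and the columns form the other. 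Consequently the polytope they cut out has integral vertices, and any fractional point in it---in particular $\fracas^{*}$, which satisfies the load constraints and has entries in $[0,\maxprob_{\adrev\adpap}]\subseteq[0,1]$---decomposes as a convex combination of polynomially many integral, load-feasible assignments, computable in $poly(\numrev,\numpap)$ time. Note that I deliberately leave the fractional bounds $\maxprob_{\adrev\adpap}$ out of the decomposition polytope (using only the integer bound $1$), so that integrality is preserved; the $\maxprob$ constraints need only hold in expectation.

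It then remains to check correctness. Interpreting the convex weights as probabilities gives a randomized assignment whose support consists only of load-feasible deterministic assignments and whose marginals equal $\fracas^{*}$ exactly; hence $\mathbb{P}[\intas_{\adrev\adpap}=1]=\fracas^{*}_{\adrev\adpap}\le\maxprob_{\adrev\adpap}$ for every pair, so the $\maxprob$ constraints are met even though an individual realization may still assign a low-$\maxprob$ pair. For optimality, any feasible randomized assignment induces a marginal matrix $\fracas$ that is feasible for the relaxation and attains the same objective $\sum_{\adrev,\adpap}\simmat_{\adrev\adpap}\fracas_{\adrev\adpap}$; thus no randomized assignment can exceed the LP optimum, which our construction achieves. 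I expect the main obstacle to be the decomposition step: one must verify that the load-constraint system genuinely satisfies the bihierarchy hypothesis of \cite{Budish2009IMPLEMENTINGRA} (and that inequality loads, rather than equalities, are admissible) so that both integrality of the vertices and an efficient decomposition algorithm are available---everything else is a routine LP-plus-linearity-of-expectation argument.
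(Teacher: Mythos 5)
Your proposal is correct and follows essentially the same route as the paper: solve the same LP over marginal assignment matrices ($\mathcal{LP}1$, with the load constraints and the entrywise bounds $\fracas_{\adrev\adpap}\le\maxprob_{\adrev\adpap}$) and then implement the optimal fractional matrix via the bihierarchy generalization of Birkhoff--von Neumann from \cite{Budish2009IMPLEMENTINGRA}, whose hypotheses (rows and columns as the two hierarchies, integer quotas $[0,\revload]$ and $[\papload,\papload]$, with the fractional $\maxprob$ bounds correctly kept out of the decomposition polytope and enforced only on the marginals) check out exactly as you anticipate. The only difference is presentational: the paper replaces the black-boxed decomposition with a simpler self-contained sampling procedure (randomized cycle-canceling on a flow network, Algorithm~\ref{algsampling}) that draws a single assignment with the right marginals in $O(\numpap\numrev(\numpap+\numrev))$ time, while explicitly acknowledging that invoking \cite{Budish2009IMPLEMENTINGRA} on the LP solution already completes the proof.
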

We describe the algorithm, thereby proving this result, in the next two subsections. Our algorithm that realizes this result has two parts. In the first part, we find an optimal ``fractional assignment matrix,'' which gives the marginal probabilities of individual reviewer-paper assignments. The second part of the algorithm then samples an assignment, respecting the marginal probabilities specified by this fractional assignment.

\subsection{Finding the Fractional Assignment}

Define a {\it fractional assignment matrix} as a matrix $\fracas \in [0, 1]^{\numrev \times \numpap}$ that obeys the load constraints $\sum_{\adpap \in \papset} \fracas_{\adrev\adpap} \leq \revload$ for all reviewers $\adrev \in \revset$ and $\sum_{\adrev \in \revset} \fracas_{\adrev\adpap} = \papload$ for all papers $\adpap \in \papset$. Note that any deterministic assignment can be represented by a fractional assignment matrix with all entries in \{0, 1\}. Any randomized assignment is associated with a fractional assignment matrix where $\fracas_{\adrev\adpap}$ is the marginal probability that reviewer $\adrev$ is assigned to paper $\adpap$. Furthermore, randomized assignments associated with the same fractional assignment matrix have the same expected sum-similarity. The paper \cite{Budish2009IMPLEMENTINGRA} proves an extension of the Birkhoff-von Neumann theorem which shows that all fractional assignment matrices are implementable, i.e., they are associated with at least one randomized assignment. 
On the other hand, any probability matrix not obeying the load constraints cannot be implemented by a lottery over deterministic assignments, since all deterministic assignments do obey the constraints. 
Therefore, finding the optimal randomized assignment is equivalent to solving the following LP, which we call $\mathcal{LP}1$:
\begin{align}
\argmax_{\fracas \in \mathbb{R}^{\numrev \times \numpap}} \quad & \sum_{\adpap \in \papset} \sum_{\adrev \in \revset} \simmat_{\adrev\adpap} \fracas_{\adrev\adpap} \label{obj:LP1}\\
\text{subject to } \quad & 0 \leq \fracas_{\adrev\adpap} \leq 1 & \forall \adrev \in \revset, \forall \adpap \in \papset \label{cnt:LP1_range}\\
&\sum_{\adpap \in \papset} \fracas_{\adrev\adpap} \leq \revload & \forall \adrev \in \revset \label{cnt:LP1_colsum}\\
& \sum_{\adrev \in \revset} \fracas_{\adrev\adpap} = \papload & \forall \adpap \in \papset \label{cnt:LP1_rowsum}\\
& \fracas_{\adrev\adpap} \leq \maxprob_{\adrev\adpap} & \forall \adrev \in \revset, \forall \adpap \in \papset \label{cnt:LP1_Q}.
\end{align}

$\mathcal{LP}1$ has $O(\numpap\numrev)$ variables and $O(\numpap\numrev)$ constraints. Using techniques from \cite{jiang2020faster}, $\mathcal{LP}1$ can be solved in $O((\numpap\numrev)^{2.055})$ time.

\subsection{Implementing the Probabilities} \label{secbasicsamplingalgo}
$\mathcal{LP}1$ only finds the optimal marginal assignment probabilities $\fracas$ (where $\fracas$ now refers to a solution to $\mathcal{LP}1$). It remains to show whether and how these marginal probabilities can be implemented as a randomization over deterministic paper assignments. 
The paper \cite{Budish2009IMPLEMENTINGRA} provides a method for sampling a deterministic assignment from a fractional assignment matrix, which completes our algorithm once applied to the optimal solution of $\mathcal{LP}$1. Here we propose a simpler version of the sampling algorithm. Pseudocode for the algorithm is presented as Algorithm~\ref{algsampling}; we describe the algorithm in detail below. In Appendix~\ref{bvnthmproof}, we present a supplementary algorithm to compute the full distribution over deterministic assignments, which \cite{Budish2009IMPLEMENTINGRA} does not. Knowing the full distribution may be useful in order to compute other properties of the randomized assignment not calculable from $\fracas$ directly. 

\begin{algorithm}[t]
{\bf Input:} Fractional assignment matrix $\fracas$, reviewer set $\revset$, paper set $\papset$ \\
{\bf Ouput:} Deterministic assignment matrix $\intas$ \\
{\bf Algorithm:}
\begin{algorithmic}[1]
\State Construct vertex set $V \gets \revset \cup \papset \cup \{s\} \cup \{t\}$ \label{lineconstvertices} 
\State Construct directed edge set $E \gets \{(\adrev, \adpap) | \forall \adrev \in \revset, \adpap \in \papset\} \cup \{(s, \adrev) | \forall \adrev \in \revset\} \cup \{(\adpap, t) | \forall \adpap \in \papset\}$ \label{lineconstedges} 
\State Construct capacity function $\capac: E \to \mathbb{Z}$ as $\capac(e) \gets \begin{cases} 
1 & \text{if } e \in \revset \times \papset \\
\revload & \text{if } e \in \{s\} \times \revset \\
\papload  & \text{if } e \in \papset \times \{t\}
\end{cases}$ \label{lineconstcap} 
\State Construct initial flow function $f: E \to \mathbb{R}$ as $f(e) \gets \begin{cases}
\fracas_{\adrev\adpap} & \text{if } e = (\adrev, \adpap) \in \revset \times \papset \\
\sum_{\adpap \in \papset} \fracas_{\adrev\adpap} & \text{if } e = (s, \adrev) \in \{s\} \times \revset \\
\sum_{\adrev \in \revset} \fracas_{\adrev\adpap} & \text{if } e = (\adpap, t) \in \papset \times \{t\}
\end{cases}$ \label{lineconstflow} 
\While {$\exists e \in E$ such that $f(e) \not\in \mathbb{Z}$}
    \State Find a cycle of edges (ignoring direction) $C = \{e_1, \dots, e_k\}$ such that $f(e_i) \not\in \mathbb{Z}, \forall i \in [k]$ \label{linecyclefind}
    \State $A \gets \{e \in C | \text{ $e$ is directed in the same direction as $e_1$ along the cycle}\}$ 
    \State $B \gets C \setminus A$
    \State $\alpha \gets \min\left( \min_{e \in A}  f(e), \min_{e \in B} \capac(e) - f(e) \right)$ \label{lineflowamt1}
    \For {$e \in A$}
        \State $f_1(e) \gets f(e) - \alpha$
    \EndFor 
    \For {$e \in B$}
        \State $f_1(e) \gets f(e) + \alpha$
    \EndFor 
    \State $\beta \gets \min\left( \min_{e \in A}  \capac(e) - f(e), \min_{e \in B} f(e) \right)$ \label{lineflowamt2}
    \For {$e \in A$}
        \State $f_2(e) \gets f(e) + \beta$
    \EndFor 
    \For {$e \in B$}
        \State $f_2(e) \gets f(e) - \beta$
    \EndFor 
    \State $\gamma \gets \frac{\beta}{\alpha + \beta}$ \label{linegamma}
    \State With probability $\gamma$, $f \gets f_1$; else $f \gets f_2$ \label{linesetflow}
\EndWhile
\State $\intas_{\adrev\adpap} = f((\adrev, \adpap)), \forall (\adrev, \adpap) \in \revset \times \papset$ \label{lineconstructM}
\end{algorithmic}
\caption{Sampling algorithm for the Pairwise-Constrained Problem.}\label{algsampling}
\end{algorithm}

We begin by constructing a directed graph $G=(V, E)$ for our problem, along with a capacity function $\capac: E \to \mathbb{Z}$ (Lines~\ref{lineconstvertices}-\ref{lineconstcap}). First, construct one vertex for each reviewer, one vertex for each paper, and source and destination vertices $s, t$. Add an edge from the source vertex to each reviewer's vertex with capacity $\revload$. Add an edge from each paper's vertex to the destination vertex with capacity $\papload$. Finally, add an edge from each reviewer to each paper with capacity $1$. We also construct a flow function $f: E \to \mathbb{R}$, which obeys the flow conservation constraints $\sum_{e \in E \cap (V \times \{v\})} f(e) = \sum_{e \in E \cap (\{v\} \times V)} f(e), \forall v \in V \setminus \{s, t\}$ and the capacity constraints $f(e) \leq \capac(e), \forall e \in E$ (Line~\ref{lineconstflow}). A (possibly fractional) assignment $\fracas$ can be represented as a flow on this graph, where the flow from reviewer $i$ to paper $j$ corresponds to the probability reviewer $i$ is assigned to paper $j$ and the other flows are set uniquely by flow conservation. Due to the load constraints on assignments, the flows on the edges from the papers to the destination must be equal to those edges' capacities and the flows on the edges from the source to the reviewers must be less than or equal to the capacities. 

The algorithm then proceeds in an iterative manner, modifying the flow function $f$ on each iteration. On each iteration, we first check if there exists a ``fractional edge,'' an edge with non-integral flow. If no such edge exists, our current assignment is integral and so we can stop iterating. If there does exist a fractional edge, we then find an arbitrary cycle of fractional edges, ignoring direction (Line~\ref{linecyclefind}); this can be done by starting at any fractional edge and walking along fractional edges until a previously-visited vertex is returned to. On finding a cycle, we randomly modify the flow on each of the edges in the cycle in order to guarantee that at least one of the flows becomes integral. In what follows, we first prove that such a cycle of fractional edges can always be found. We then show how to modify the flows in order to guarantee the implementation of the marginal assignment probabilities.

We now show that a directionless cycle of fractional edges must exist whenever one fractional edge exists. Initially, by the properties of $\fracas$, the total flow on each edge going into vertex $t$ is integral; further, the algorithm only ever changes the flow on edges with non-integral flow. Therefore, the total flow going into $t$ is always integral. By flow conservation, the total flow leaving $s$ is also always integral. So, if there is a fractional edge adjacent to $s$, there must also be another fractional edge adjacent to $s$. As already stated, there are no fractional edges adjacent to $t$. Finally, for each vertex $v \in V \setminus \{s, t\}$, by flow conservation, there can never be only one fractional edge adjacent to $v$. Therefore, every vertex that is adjacent to a fractional edge must also be adjacent to another fractional edge. This proves that a directionless cycle of fractional edges must exist if one fractional edge exists.

We now show how to modify the flow on the edges in this cycle. We can keep pushing flow in some direction on this cycle (pushing negative flow if the edge is directed backwards) until some edge is at capacity or has $0$ flow. Call this amount of additional flow $\alpha$, and the resulting flow $f_1$. We can do the same thing in the other direction on the cycle, calling the additional flow $\beta$ and the resulting flow $f_2$. Both $f_1$ and $f_2$ must have at least one more integral edge than $f$, since some edge is at capacity. Further, both $f_1$ and $f_2$ obey the flow conservation and capacity constraints. Defining $\gamma \gets \frac{\beta}{\alpha + \beta}$, we set $f \gets f_1$ with probability $\gamma$ and $f \gets f_2$ with probability $1 - \gamma$ (Lines~\ref{linegamma}-\ref{linesetflow}).

Once all edges are integral (after the final iteration), we construct the sampled deterministic assignment $\intas$ from the flow on the reviewer-paper edges (Line~\ref{lineconstructM}). Since $f$ obeys the capacity constraints on all edges, $\intas$ obeys the load constraints and so is in fact an assignment. Since on each iteration the initial flow $f$ satisfies $f(e) = \gamma f_1(e) + \left( 1 - \gamma \right) f_2(e), \forall e \in E$, the expected final flow on each edge is always equal to the current flow on that edge. Since the expectation of a Bernoulli random variable is exactly the probability it equals one, each final reviewer-paper assignment $M_{\adrev\adpap}$ has been chosen with the desired marginal probabilities $\fracas_{\adrev\adpap}$.

Each iteration of this algorithm takes $O(\numpap + \numrev)$ time to find a cycle in the $O(\numpap + \numrev)$ vertices (if a list of fractional edges adjacent to each vertex is maintained), and it can take $O(\numpap \numrev)$ iterations to terminate since one edge becomes integral every iteration. Therefore, the sampling algorithm is overall $O(\numpap \numrev (\numpap + \numrev))$.

The time complexity of our full algorithm, including both $\mathcal{LP}1$ and the sampling algorithm, is dominated by the complexity of solving the LP. Since standard paper assignment algorithms such as TPMS can be implemented by solving an LP of the same size, our algorithm is comparable in complexity. If a conference currently does solve an LP to find their assignment, 
whatever LP solver a conference currently uses for their paper assignment algorithm could be used in our algorithm as well.

\section{Randomized Assignment with Constraints on Pairs of Reviewers} \label{secinstalgo}
We now turn to the problem of controlling the probabilities that certain pairs of reviewers are assigned to the same paper, defined in Section~\ref{secproblem} as the Triplet-Constrained Problem (Definition~\ref{defntriplet}). In the following subsections, we first show that the problem of finding an optimal randomized assignment given arbitrary constraints on the maximum probabilities of each reviewer-reviewer-paper grouping is NP-hard. We then show that, for the practical special case of restrictions on reviewers from the same subset of a partition of $\revset$ (such as the same primary academic institution or geographical area of residence), an optimal randomized assignment can be found efficiently.

\subsection{NP-Hardness of Arbitrary Constraints}

As described in Section~\ref{secproblem}, solving the Triplet-Constrained Problem would allow the program chairs of a conference maximum flexibility in how they control the probabilities of the assignments of pairs of reviewers. Unfortunately, as the following theorem shows, this problem cannot be efficiently solved. 



\begin{thm}\label{nph}
The Triplet-Constrained Problem is NP-hard, by reduction from 3-Dimensional Matching.
\end{thm}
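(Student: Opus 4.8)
The plan is to reduce from 3-Dimensional Matching (3DM): given disjoint sets $X, Y, Z$ each of size $N$ and a collection $\mathcal{C}$ of triples $c = (x,y,z) \in X \times Y \times Z$, decide whether some subcollection of $N$ triples covers every element exactly once. The central observation I would exploit is that a joint-probability constraint set to zero acts as a \emph{hard} conflict: if $\maxprobtens_{abp} = 0$, then $\mathbb{P}[\intas_{ap} = 1 \land \intas_{bp} = 1] = 0$, so in \emph{every} deterministic assignment in the support of the randomized assignment, reviewers $a$ and $b$ are not both on paper $p$. This lets me encode disjointness of triples directly, and a short counting argument turns ``disjoint'' into ``perfect matching.''

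Concretely, I would build an instance with a single paper $p$ of load $\papload = N$. For each triple $c \in \mathcal{C}$ I create one reviewer $\adrev_c$ with similarity $\simmat_{\adrev_c p} = 1$, and I add $N$ dummy reviewers, each with similarity $0$ to $p$; all reviewer loads are $1$ and all entries of $\maxprob$ equal $1$ (so the pairwise marginals impose nothing). The only nontrivial constraints are the triplet constraints on $p$: I set $\maxprobtens_{\adrev_c \adrev_{c'} p} = 0$ whenever triples $c$ and $c'$ share a coordinate (i.e.\ they agree in their $X$-, $Y$-, or $Z$-entry), and leave every other entry of $\maxprobtens$ equal to $1$ (including all constraints involving dummies). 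The instance has $|\mathcal{C}| + N$ reviewers and one paper, so it is polynomial in the size of the 3DM instance, and it is always feasible since the $N$ dummies can fill $p$.

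The correctness claim I would then establish is that the optimal objective equals the size $\tau$ of the largest pairwise-disjoint subfamily of $\mathcal{C}$, and that $\tau = N$ iff the 3DM instance is a yes-instance. The combinatorial half is immediate: because the triples are tripartite and $|X| = |Y| = |Z| = N$, any $N$ pairwise-disjoint triples use $N$ distinct elements in each of $X, Y, Z$ and hence cover all $3N$ elements exactly once, so a disjoint family of size $N$ is precisely a perfect matching (and conversely any matching gives $N$ disjoint triples); in particular $\tau \leq N$ always. For the optimization half, the objective of any feasible randomized assignment is $\mathbb{E}[\,\#\{\text{triple-reviewers placed on } p\}\,]$, and since the zero constraints force the triple-reviewers on $p$ to form a disjoint family in every assignment of the support, this count is at most $\tau$ in each such assignment, so the expectation is at most $\tau$. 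Conversely $\tau$ is attained by the \emph{deterministic} assignment placing a maximum disjoint family (of size $\tau \leq N$) together with $N - \tau$ dummies on $p$; this point-mass distribution satisfies all constraints, as the only zero constraints are between overlapping triples, which are never both selected. Thus $\mathrm{OPT} = \tau$, and deciding whether $\mathrm{OPT} \geq N$ solves 3DM.

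The step I expect to need the most care is this optimization half, specifically ruling out any benefit from genuine randomization: because the joint constraints take only the values $0$ and $1$, every deterministic assignment in the support must individually respect all the hard conflicts, which is exactly what collapses the expected objective over distributions down to the best deterministic value. I would make this argument fully rigorous, since it is the point where one might mistakenly believe that a cleverly spread-out distribution could beat $\tau$. The remaining obligations---checking that the construction obeys the problem's load conventions (paper load exactly $N$, reviewer loads at most $1$) and that it is polynomially sized---are routine.
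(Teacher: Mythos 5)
Your proof is correct, but it takes a genuinely different route from the paper's. The paper embeds the 3-Dimensional Matching instance into the assignment structure itself: reviewers are $X \cup Y$, papers are $Z$, with loads $\papload = 2$ and $\revload = 1$, and $\maxprobtens_{ij\adpap} = 1$ exactly on the given tuples (and $0$ elsewhere). The hardness there lies entirely in \emph{feasibility}: when the 3DM instance is a no-instance, no randomized assignment satisfies the constraints at all, and the similarity matrix plays no role whatsoever. You instead build an always-feasible instance (a single paper with load $N$, one unit-similarity reviewer per tuple, $N$ zero-similarity dummies) and put the hardness into the \emph{objective}: zero triplet constraints between coordinate-sharing tuples force every support assignment to select a pairwise-disjoint subfamily, so the optimum equals the maximum disjoint-family size $\tau$, and $\tau \geq N$ iff the 3DM instance is a yes-instance. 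Your flagged worry---ruling out gains from randomization---is handled correctly: since your $\maxprobtens$ entries are all $0$ or $1$, each deterministic assignment in the support individually respects the hard conflicts, which collapses the expected objective to the best deterministic value; this mirrors the support argument the paper uses in its no-instance case. The tradeoff: the paper's reduction proves the stronger statement that even deciding non-emptiness of the feasible region is NP-hard, which is precisely what its proof of Corollary~\ref{nphcor} invokes (a polynomially-described polytope would let an LP with any objective detect feasibility); your instance is always feasible, so your argument as stated does not directly yield that corollary, though it could be adapted since the single-reviewer marginals are linear functions of the triple probabilities whenever $\papload \geq 2$. Conversely, your reduction shows something the paper's does not make explicit: the Triplet-Constrained Problem remains NP-hard even under a promise that a feasible randomized assignment exists, cleanly isolating the optimization difficulty from the feasibility difficulty.
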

3-Dimensional Matching is an NP-complete decision problem that takes as input three sets $X, Y, Z$ of size $s$ as well as a collection of tuples in $X \times Y \times Z$; the goal is to find a choice of $s$ tuples out of the collection such that no elements of any set are repeated \cite{karp1972reducibility}. Our reduction maps sets $X \cup Y$ to $\revset$ and $Z$ to $\papset$, and constructs $\maxprobtens \in \{0, 1\}^{\numrev \times \numrev \times \numpap}$ to allow only the assignments where the corresponding tuples are allowable in the 3-Dimensional Matching instance. The full proof is stated in Appendix~\ref{nphproof}.

Theorem~\ref{nph} implies a more fundamental result about the feasible region of implementable reviewer-reviewer-paper probability tensors, that is, the tensors $\fracastens \in [0, 1]^{\numrev \times \numrev \times \numpap}$ where entry $\fracastens_{ij\adpap}$ represents the marginal probability that both reviewers $i$ and $j$ are assigned to paper $\adpap$ under some randomized assignment. We can represent any deterministic assignment by a $3$-dimensional tensor $\intastens \in \{0, 1\}^{\numrev \times \numrev \times \numpap}$ where $\intastens_{ij\adpap} = 1$ if and only if both reviewers $i$ and $j$ are assigned to paper $\adpap$. Just as in the earlier case of fractional assignment matrices, the set of implementable probability tensors is a polytope with deterministic assignment tensors at the vertices (since any implementable probability tensor is a convex combination of deterministic assignment tensors). For fractional reviewer-paper assignment matrices, this polytope was defined by a small number ($O(\numpap \numrev)$) of linear inequalities, despite the fact that it has a large number of vertices (factorial in $\numpap$ and $\numrev$). However, this is no longer the case for reviewer-reviewer-paper probabilities. 
\begin{cor}\label{nphcor}
The polytope of implementable reviewer-reviewer-paper probabilities is not expressible in a polynomial (in $\numrev$ and $\numpap$) number of linear inequality constraints (assuming $P \neq NP$).
\end{cor}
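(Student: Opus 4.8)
The plan is to argue by contradiction: I will assume that the polytope $\mathcal{F}$ of implementable reviewer-reviewer-paper probability tensors can be described by a number of linear inequalities that is polynomial in $\numrev$ and $\numpap$, and from this assumption construct a polynomial-time algorithm for the Triplet-Constrained Problem, contradicting Theorem~\ref{nph} under $P \neq NP$. The core observation is that, exactly as $\mathcal{LP}1$ captured the Pairwise-Constrained Problem, the Triplet-Constrained Problem is itself the maximization of a linear objective over $\mathcal{F}$ intersected with a polynomial number of additional linear constraints; hence a polynomial-size inequality description of $\mathcal{F}$ would turn the whole problem into an explicit polynomial-size linear program.

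First I would reformulate the Triplet-Constrained Problem (Definition~\ref{defntriplet}) as an optimization over $\fracastens \in \mathcal{F}$. Since $\fracastens_{\adrev\adrev\adpap}$ is the marginal probability that reviewer $\adrev$ is assigned to paper $\adpap$, the expected sum-similarity objective equals $\sum_{\adpap \in \papset}\sum_{\adrev \in \revset} \simmat_{\adrev\adpap}\, \fracastens_{\adrev\adrev\adpap}$, which is linear in the entries of the tensor. The pairwise constraints $\mathbb{P}[\intas_{\adrev\adpap}=1]\le \maxprob_{\adrev\adpap}$ become $\fracastens_{\adrev\adrev\adpap} \le \maxprob_{\adrev\adpap}$, and the triplet constraints become $\fracastens_{ab\adpap} \le \maxprobtens_{ab\adpap}$; both are linear, and together there are only $O(\numrev^2\numpap)$ of them, matching the $O(\numrev^2\numpap)$ tensor variables. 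Just as in the pairwise case, optimizing over randomized assignments is equivalent to optimizing over $\mathcal{F}$: every randomized assignment induces a tensor in $\mathcal{F}$, every point of $\mathcal{F}$ is by definition implementable by some randomized assignment, and both the objective and the constraints depend only on the tensor.

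Next, under the contradiction hypothesis, $\mathcal{F}$ is cut out by $poly(\numrev,\numpap)$ linear inequalities. Adjoining the $O(\numrev^2\numpap)$ constraints above yields an explicit linear program of size polynomial in $\numrev$ and $\numpap$, which can be solved in polynomial time by standard LP methods. Its optimum equals the optimal value of the Triplet-Constrained Problem, so that problem would be solvable in polynomial time, contradicting its NP-hardness (Theorem~\ref{nph}) under the assumption $P \neq NP$. This establishes Corollary~\ref{nphcor}.

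I expect the main subtlety to be twofold, and I would state both points explicitly rather than belabor them. The first is justifying the reduction of the randomized-assignment problem to optimization over $\mathcal{F}$ (the analogue of the $\mathcal{LP}1$ argument): that implementability of every point of $\mathcal{F}$ together with the dependence of the objective only on marginals makes the two problems equivalent. The second is being precise about what "expressible in a polynomial number of linear inequality constraints" buys us: a genuine polynomial-size inequality description in the tensor's own space immediately gives an explicit polynomial-size LP (and in particular a trivial polynomial-time separation oracle), so no appeal to the ellipsoid method beyond standard polynomial-time LP solvability is needed to reach the contradiction.
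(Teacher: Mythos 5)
Your proposal is correct and follows essentially the same route as the paper: assume a polynomial-size inequality description of the polytope, adjoin the (polynomially many) linear constraints from $\maxprob$ and $\maxprobtens$, solve the resulting explicit polynomial-size LP in polynomial time, and contradict Theorem~\ref{nph} under $P \neq NP$. The only cosmetic difference is that the paper routes the contradiction through the feasibility variant (solving the LP with an arbitrary linear objective to decide Arbitrary-Constraint Feasibility), whereas you carry along the sum-similarity objective and invoke NP-hardness of the full optimization problem; both are immediate consequences of the same hardness proof.
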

The proof of this result is also stated in Appendix~\ref{nphproof}. 

\subsection{Constraints on Disjoint Reviewer Sets} \label{secpartitionproblem}
Since the most general problem of arbitrary constraints on reviewer-reviewer-paper triples is NP-hard, we must restrict ourselves to tractable special cases of interest. One such special case arises when the program chairs of a conference can partition the reviewers in such a way that they wish to prevent any two reviewers within the same subset from being assigned to the same paper. For example, reviewers can be partitioned by their primary academic institution. Since reviewers at the same institution are likely closely associated, program chairs may believe that placing them together as co-reviewers is more risky than would be implied by our concern about either reviewer individually. In this case, there may not even be any concern about the reviewers' motivations; the concern may simply be that the reviewers' opinions would not be sufficiently independent. Other partitions of interest could be the reviewer's geographical area of residence or research sub-field, as each of these defines a ``community'' of reviewers that may be more closely associated. This special case corresponds to instances of the Triplet-Constrained Problem where $\maxprobtens_{ab\adpap} = 0$ if reviewers $a$ and $b$ are in the same subset, and $\maxprobtens_{ab\adpap} = 1$ otherwise. 

We formally define this problem as follows:
\begin{defn}[Partition-Constrained Problem] \label{defnpartition}
The input to the problem is a similarity matrix $\simmat$, a matrix $\maxprob \in [0, 1]^{\numrev \times \numpap}$, and a partition of the reviewer set into subsets $\adinst_1, \dots, \adinst_\numinst \subseteq \revset$. The goal is to find a randomized assignment of papers to reviewers that maximizes $\mathbb{E}\left[\sum_{\adrev \in \revset} \sum_{\adpap \in \papset} \simmat_{\adrev\adpap} \intas_{\adrev\adpap}\right]$ subject to the constraints that $\mathbb{P}[\intas_{\adrev\adpap} = 1] \leq \maxprob_{\adrev\adpap}, \forall \adrev \in \revset, \adpap \in \papset$, and $\mathbb{P}[\intas_{a\adpap} = 1 \land \intas_{b\adpap} = 1] = 0, \forall a, b \in \adinst_i, \forall i \in [\numinst]$.
\end{defn}


For this special case of the Triplet-Constrained Problem, we show that the problem is efficiently solvable, as stated in the following theorem.
\begin{thm} \label{thminsts}
There exists an algorithm which returns an optimal solution to the Partition-Constrained Problem in poly(\numrev, \numpap) time. 
\end{thm}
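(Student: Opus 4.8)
The plan is to mirror the two-part structure used to prove Theorem~\ref{bvnthm}: first find an optimal fractional assignment via a linear program, and then sample a deterministic assignment from it using a generalization of Algorithm~\ref{algsampling}. The key new idea is to encode the partition constraint directly into an expanded flow network, so that every deterministic assignment in the support of the sampled distribution automatically places at most one reviewer from each subset on each paper.

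First I would observe how the partition constraint translates into the marginals. Since $\mathbb{P}[\intas_{ap}=1 \land \intas_{bp}=1]=0$ for all $a,b$ in the same subset $\adinst_i$, at most one reviewer from $\adinst_i$ is ever assigned to paper $\adpap$; hence $\sum_{\adrev \in \adinst_i} \fracas_{\adrev\adpap} = \mathbb{E}[\#\{\adrev\in\adinst_i : \intas_{\adrev\adpap}=1\}] = \mathbb{P}[\text{some } \adrev\in\adinst_i \text{ is assigned to } \adpap] \le 1$. This yields the necessary linear constraint $\sum_{\adrev\in \adinst_i}\fracas_{\adrev\adpap}\le 1$ for every subset $i$ and paper $\adpap$. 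I would therefore define $\mathcal{LP}2$ to be $\mathcal{LP}1$ augmented with these constraints, which keeps the program polynomial-size ($O(\numpap\numrev)$ variables and constraints), and solve it to obtain an optimal fractional matrix $\fracas^\star$. By the identity above, any feasible randomized assignment induces marginals feasible for $\mathcal{LP}2$ with the same expected similarity, so the optimum of $\mathcal{LP}2$ upper-bounds every feasible randomized assignment.

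Next I would build an expanded flow network to sample from $\fracas^\star$. Alongside the source $s$, sink $t$, reviewer vertices, and paper vertices, I introduce a ``group'' vertex $(i,\adpap)$ for each subset $\adinst_i$ and each paper $\adpap$, and route each reviewer-paper edge through the group vertex of the reviewer's subset: edges $\adrev\to(i,\adpap)$ and $(i,\adpap)\to \adpap$ with capacity $1$, together with $s\to \adrev$ of capacity $\revload$ and $\adpap\to t$ of capacity $\papload$. The capacity-$1$ bottleneck on each $(i,\adpap)\to \adpap$ edge is exactly what forces at most one reviewer per subset onto each paper in any integral flow. I would set the flow on $\adrev\to(i,\adpap)$ to $\fracas^\star_{\adrev\adpap}$, with the remaining flows fixed by conservation, and then run the cycle-rounding procedure of Algorithm~\ref{algsampling} on this network. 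The cycle-existence argument carries over unchanged: the flow into $t$ is integral (each $\adpap\to t$ carries $\papload$), hence so is the flow out of $s$, and flow conservation at every internal vertex---reviewer, paper, and group vertices alike---forces any vertex touching a fractional edge to touch at least two, so a directionless cycle of fractional edges always exists. The rounding step preserves flow conservation, the integral capacities, and the per-edge marginals, so each sampled integral flow is a valid deterministic assignment obeying the loads, with $\mathbb{P}[\intas_{\adrev\adpap}=1]=\fracas^\star_{\adrev\adpap}\le\maxprob_{\adrev\adpap}$ and at most one reviewer of each subset per paper.

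The main obstacle I anticipate is cleanly separating the two roles played by the constraints. The caps $\maxprob_{\adrev\adpap}$ must remain soft targets that shape $\fracas^\star$ inside $\mathcal{LP}2$ but are never imposed as capacities in the rounding network, since $\maxprob$ can be fractional and fractional capacities would destroy the integrality of the rounded flow; by contrast, the partition constraint must be baked into the network through the integral capacity-$1$ edges $(i,\adpap)\to \adpap$. I would verify that the rounded flow respects all integral capacities (making it a genuine assignment obeying loads and the one-per-subset rule) while its expectation on each reviewer-paper edge equals $\fracas^\star_{\adrev\adpap}$, and that the cycle argument is unaffected by the added group vertices. Optimality and polynomial running time then follow exactly as in Theorem~\ref{bvnthm}: $\mathcal{LP}2$ is polynomial-size, the sampling runs in time polynomial in the (still polynomial) number of vertices and edges of the expanded network, and the achieved expected similarity equals the LP optimum, which upper-bounds every feasible randomized assignment.
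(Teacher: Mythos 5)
Your proposal is correct, and while the LP half coincides with the paper's ($\mathcal{LP}2$ is exactly $\mathcal{LP}1$ plus the constraints $\sum_{\adrev \in \adinst} \fracas_{\adrev\adpap} \leq 1$, and your expectation argument for why these constraints are necessary is essentially the paper's proof of Lemma~\ref{samplinglemma}(ii)), your sampling half takes a genuinely different route. The paper keeps the original flow network of Algorithm~\ref{algsampling} and instead modifies the algorithm itself: the loop-finding subroutine (Algorithm~\ref{algsamplingmod}) steers the cycle so that whenever it enters a paper from a reviewer in subset $\adinst$ it preferentially exits back into $\adinst$, and when it cannot, it records the affected subset-paper loads in the sets $D_1, D_2$ and caps the push amounts $\alpha, \beta$ so that no such load ever crosses an integer. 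Correctness then requires a dedicated argument (Appendix~\ref{samplingthmproof}) that each subset-paper load is preserved if integral and rounded to an adjacent integer if fractional. You instead encode the constraint structurally: a capacity-$1$ bottleneck vertex per (subset, paper) pair makes the \emph{unmodified} cycle-rounding enforce the one-per-subset rule automatically, and your checks are right that the cycle-existence argument (flow conservation at group vertices forces a second fractional edge), the martingale property $f = \gamma f_1 + (1-\gamma) f_2$, and termination all carry over verbatim; the network grows to $O(\numrev\numpap)$ vertices, giving a somewhat worse but still polynomial runtime than the paper's $O(\numpap\numrev(\numpap+\numrev))$. For Theorem~\ref{thminsts} as stated, your route is arguably cleaner, since it needs no new subroutine or correctness lemma. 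What the paper's heavier machinery buys is generality beyond this theorem: its sampler applies to \emph{any} fractional assignment, including those with subset loads exceeding $1$, and Corollary~\ref{samplingcor} shows it simultaneously minimizes the expected number of same-subset pairs, which underpins the loosened-constraint variant in Section~\ref{secinstlpalgo}; your capacity-$1$ gadget cannot even be initialized in that regime (the initial flow on a group-to-paper edge would exceed its capacity), and raising the capacity to $\lceil \cdot \rceil$ would recover only the upper rounding, not the paper's two-sided load preservation.
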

We present the algorithm that realizes this result in the following subsections, thus proving the theorem. The algorithm has two parts: it first finds a fractional assignment matrix $\fracas$ meeting certain requirements, and then samples an assignment while respecting the marginal assignment probabilities given by $\fracas$ and additionally never assigning two reviewers from the same subset to the same paper. For ease of exposition, we first present the sampling algorithm, and then present an LP which finds the optimal fractional assignment matrix meeting the necessary requirements.

\subsubsection{Partition-Constrained Sampling Algorithm} \label{secinstsamplingalgo}
The sampling algorithm we present in this section takes as input a fractional assignment matrix $\fracas$ and samples an assignment while respecting the marginal assignment probabilities given by $\fracas$. The sampling algorithm is based on the following lemma:
\begin{lemma} \label{samplinglemma}
Consider any fractional assignment matrix $\fracas$ and any partition of $\revset$ into subsets $\adinst_1, \dots, \adinst_\numinst$. 
\begin{enumerate}[label=(\roman*)]
    \item There exists a sampling algorithm that implements the marginal assignment probabilities given by $\fracas$ and runs in $O(\numpap \numrev (\numpap + \numrev))$ time such that, for all papers $\adpap\in \papset$ and subsets $\adinst \in \{\adinst_1, \dots, \adinst_\numinst\}$ where $\sum_{\adrev \in \adinst} \fracas_{\adrev\adpap} \leq 1$, the algorithm never samples an assignment assigning two reviewers from subset $\adinst$ to paper $\adpap$.
    \item For any sampling algorithm that implements the marginal assignment probabilities given by $\fracas$, for all papers $\adpap\in \papset$ and subsets $\adinst \in \{\adinst_1, \dots, \adinst_\numinst\}$ where $\sum_{\adrev \in \adinst} \fracas_{\adrev\adpap} > 1$, the expected number of pairs of reviewers from subset $\adinst$ assigned to paper $\adpap$ is strictly positive.
\end{enumerate}
\end{lemma}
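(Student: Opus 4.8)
The plan is to handle the two parts separately: I would prove the (easier) impossibility direction (ii) by a direct expectation argument, and establish the algorithmic part (i) by augmenting the flow network of Algorithm~\ref{algsampling} with one intermediate vertex per (subset, paper) pair and re-running the same cycle-cancelling rounding.

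For part (ii), fix a paper $\adpap$ and a subset $\adinst$ with $\sum_{\adrev \in \adinst} \fracas_{\adrev\adpap} > 1$, and let $N = \sum_{\adrev \in \adinst} \intas_{\adrev\adpap}$ be the random number of reviewers from $\adinst$ assigned to $\adpap$. Since the sampler implements the marginals, $\mathbb{E}[N] = \sum_{\adrev \in \adinst} \mathbb{P}[\intas_{\adrev\adpap}=1] = \sum_{\adrev \in \adinst} \fracas_{\adrev\adpap} > 1$. Because $N$ is a nonnegative integer, $\mathbb{E}[N] > 1$ forces $\mathbb{P}[N \ge 2] > 0$ (otherwise $N \le 1$ almost surely and $\mathbb{E}[N] \le 1$). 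The number of assigned pairs from $\adinst$ on $\adpap$ equals $\binom{N}{2}$, which is at least $1$ whenever $N \ge 2$, so its expectation is at least $\mathbb{P}[N \ge 2] > 0$. I expect no difficulty here.

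For part (i), I would build a directed graph $G'$ from the one in Algorithm~\ref{algsampling} by inserting, for every subset $\adinst_j$ and paper $\adpap$, an intermediate vertex $v_{j\adpap}$; each reviewer--paper edge to $\adpap$ from a reviewer $\adrev \in \adinst_j$ is replaced by $\adrev \to v_{j\adpap} \to \adpap$, where $\adrev \to v_{j\adpap}$ has capacity $1$ and initial flow $\fracas_{\adrev\adpap}$, and $v_{j\adpap} \to \adpap$ has initial flow $\sum_{\adrev \in \adinst_j} \fracas_{\adrev\adpap}$ with capacity $1$ exactly when this sum is at most $1$ (and capacity $|\adinst_j|$ otherwise). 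The source/sink edges retain capacities $\revload$ and $\papload$, and the assignment is read off from the flow on the edges $\adrev \to v_{j\adpap}$. The routine steps are to check that this initial flow is feasible (conservation at each $v_{j\adpap}$ is immediate, since its in- and out-flow both equal $\sum_{\adrev \in \adinst_j}\fracas_{\adrev\adpap}$, and the capacity-$1$ edges are exactly the feasible ones by construction), and then to run the identical cycle-cancelling loop. Marginal preservation and termination carry over verbatim: each iteration still sets $f = \gamma f_1 + (1-\gamma) f_2$ with $\gamma = \beta/(\alpha+\beta)$, so the expected flow on every edge is fixed while at least one more edge becomes integral. The capacity-$1$ edge then delivers the guarantee: at termination $v_{j\adpap} \to \adpap$ carries integral flow $0$ or $1$, and conservation forces exactly that many of the edges $\adrev \to v_{j\adpap}$ (for $\adrev \in \adinst_j$) to carry flow $1$, so at most one reviewer of $\adinst_j$ is assigned to $\adpap$ whenever $\sum_{\adrev \in \adinst_j}\fracas_{\adrev\adpap} \le 1$.

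The main obstacle is re-establishing the invariant that a directionless cycle of fractional edges exists whenever any fractional edge exists, now that $G'$ carries new intermediate and paper vertices. I would reprove that no vertex is incident to exactly one fractional edge: the edges into $t$ stay integral (each equals $\papload$), so $t$ has none; the total flow out of $s$ stays integral (it equals $\numpap\papload$), so $s$ cannot host a lone fractional edge; and at every reviewer, paper, and intermediate vertex, flow conservation prevents a single unmatched fractional edge because the fractional parts of the incident flows must cancel. Hence the fractional subgraph has minimum degree at least $2$ on its non-isolated vertices and must contain a cycle, exactly as before. The remaining care is the running-time bound: $G'$ still has $O(\numpap\numrev)$ edges (each reviewer lies in a single subset, so paper $\adpap$ contributes at most $\min(\numinst,\numrev)$ intermediate vertices) and the loop fixes one edge per iteration, so I would track the cost of cycle-finding in the augmented graph to confirm the claimed $O(\numpap\numrev(\numpap+\numrev))$ bound.
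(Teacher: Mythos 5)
Your proof is correct, but part (i) takes a genuinely different route from the paper's. Where you reduce to the \emph{unmodified} cycle-cancelling algorithm on an augmented network---materializing each subset--paper load as an explicit edge $v_{j\adpap} \to \adpap$ whose capacity-$1$ constraint automatically enforces the guarantee---the paper instead keeps the original graph and modifies the algorithm itself: a careful loop-finding subroutine (Algorithm~\ref{algsamplingmod}) that, upon entering a paper from subset $\adinst$, exits back into $\adinst$ when possible (preserving that load exactly) and otherwise pairs $\adinst$ with another fractional-load subset $J$, together with extra stopping thresholds ($D_1$, $D_2$, the floor/ceiling terms added to $\alpha$ and $\beta$) so that no subset--paper load ever crosses an integer. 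Your reduction is arguably cleaner: no new correctness argument is needed beyond re-verifying the fractional-cycle invariant, which you do, and the flagged running-time check does go through---since edges alternate reviewer--intermediate--paper, every other vertex on the walk is a reviewer or paper vertex, so a repeat occurs within $O(\numrev + \numpap)$ steps and the $O(\numpap\numrev(\numpap+\numrev))$ bound follows. What the paper's heavier machinery buys is a strictly stronger invariant: \emph{every} subset--paper load, including those exceeding $1$, is preserved if integral and rounded to a neighboring integer if fractional; this is exactly what drives Corollary~\ref{samplingcor} (simultaneous minimization of the expected number of same-subset pairs) and the relaxed-constraint variant in Section~\ref{secinstlpalgo}. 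Your construction does not inherit this: for a pair with $\sum_{\adrev \in \adinst_j} \fracas_{\adrev\adpap} > 1$ you set capacity $|\adinst_j|$ on $v_{j\adpap} \to \adpap$, and a push along a cycle through $v_{j\adpap}$ stops only when some edge hits $0$ or capacity, so that edge's flow can drift past an integer (e.g., from $1.5$ down to $0.9$ if the binding edge elsewhere in the cycle has slack $0.6$). This costs nothing for the lemma as stated, but the corollary would require further modification of your network (integrality-crossing stopping rules again, or floor/ceiling capacities plus lower bounds). Part (ii) is essentially the paper's own argument.
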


The sampling algorithm which realizes Lemma~\ref{samplinglemma} has an additional helpful property, which holds \emph{simultaneously} for all papers and subsets. We state the property in the following corollary and make use of it later:
\begin{cor} \label{samplingcor}
For any fractional assignment matrix $\fracas$, the sampling algorithm that realizes Lemma~\ref{samplinglemma} minimizes the expected number of pairs of reviewers from subset $\adinst$ assigned to paper $\adpap$ simultaneously for all papers $\adpap\in \papset$ and subsets $\adinst \in \{\adinst_1, \dots, \adinst_\numinst\}$ among all sampling algorithms implementing the marginal assignment probabilities given by $\fracas$.
\end{cor}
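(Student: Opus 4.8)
The plan is to reduce this simultaneous statement to a single \emph{universal} per-pair lower bound that every implementing sampling algorithm must satisfy, and then show that the algorithm of Lemma~\ref{samplinglemma} meets this bound for every subset--paper pair at once. Fix a paper $\adpap\in\papset$ and a subset $\adinst\in\{\adinst_1,\dots,\adinst_\numinst\}$, and let $K_{\adinst\adpap}=\sum_{\adrev\in\adinst}\intas_{\adrev\adpap}$ be the (random) number of reviewers from $\adinst$ assigned to $\adpap$ under a sampled assignment. The number of co-assigned pairs from $\adinst$ on $\adpap$ is exactly $\binom{K_{\adinst\adpap}}{2}$, so the quantity to be minimized is $\mathbb{E}\big[\binom{K_{\adinst\adpap}}{2}\big]$. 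The crucial starting observation is that any algorithm implementing the marginals $\fracas$ has $\mathbb{E}[K_{\adinst\adpap}]=\sum_{\adrev\in\adinst}\fracas_{\adrev\adpap}=:\mu_{\adinst\adpap}$, so the mean of $K_{\adinst\adpap}$ is fixed across all competitors; only its spread can differ.

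For the universal lower bound, note that $\binom{k}{2}=k(k-1)/2$ is convex. Over all distributions of a nonnegative integer-valued random variable $K$ with fixed mean $\mu$, a standard extreme-point argument shows that $\mathbb{E}\big[\binom{K}{2}\big]$ is minimized by the two-point distribution supported on the consecutive integers $\lfloor\mu\rfloor$ and $\lceil\mu\rceil$ straddling $\mu$; equivalently, the minimum equals the value at $\mu$ of the piecewise-linear interpolation of $\binom{\cdot}{2}$ through the integers, which I denote $f(\mu)$. Since every implementing algorithm yields \emph{some} distribution of $K_{\adinst\adpap}$ with mean $\mu_{\adinst\adpap}$, this immediately gives the uniform lower bound $\mathbb{E}\big[\binom{K_{\adinst\adpap}}{2}\big]\ge f(\mu_{\adinst\adpap})$, valid for all $(\adinst,\adpap)$ and all implementing algorithms. (When $\mu_{\adinst\adpap}\le 1$ this reads $f(\mu_{\adinst\adpap})=0$, recovering Lemma~\ref{samplinglemma}(i), and when $\mu_{\adinst\adpap}>1$ it is strictly positive, recovering Lemma~\ref{samplinglemma}(ii).)

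It then remains to show that the algorithm realizing Lemma~\ref{samplinglemma} attains $f(\mu_{\adinst\adpap})$ for every pair $(\adinst,\adpap)$ simultaneously. For this I would establish the structural invariant that the algorithm always produces $K_{\adinst\adpap}\in\{\lfloor\mu_{\adinst\adpap}\rfloor,\lceil\mu_{\adinst\adpap}\rceil\}$ almost surely, i.e., it never rounds a subset--paper aggregate outside the two integers straddling its fractional value. This generalizes the guarantee of Lemma~\ref{samplinglemma}(i) (the case $\mu\le 1$, where the aggregate is rounded into $\{0,1\}$) to arbitrary $\mu$, and should follow from the cycle-canceling dynamics of the sampling procedure: the aggregate subset--paper flow is adjusted only while it is non-integral and is never pushed past an adjacent integer, so it terminates at $\lfloor\mu_{\adinst\adpap}\rfloor$ or $\lceil\mu_{\adinst\adpap}\rceil$. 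A random variable supported on exactly those two integers with the correct mean $\mu_{\adinst\adpap}$ is precisely the minimizer identified above, so the algorithm meets the bound $f(\mu_{\adinst\adpap})$ for that pair; because the invariant holds for all $(\adinst,\adpap)$ at once, all the per-pair minima are achieved simultaneously, which is exactly the corollary.

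The main obstacle is this last achievability-cum-simultaneity step, not the lower bound. The lower bound is a one-line consequence of convexity and says nothing about correlations among the assignment indicators; the real content is that a \emph{single} randomized rounding can drive every subset--paper aggregate to its two straddling integers at the same time. Verifying this requires carefully reading off the invariant maintained by the sampling procedure of Lemma~\ref{samplinglemma} --- that subset--paper aggregates behave as flow quantities adjusted only while fractional --- and checking that the cycle-canceling step cannot overshoot an adjacent integer for any aggregate. A secondary subtlety worth flagging is that the universal lower bound is derived using only the fixed mean $\mu_{\adinst\adpap}$ and ignores the individual marginals $\fracas_{\adrev\adpap}$; one must confirm that this seemingly loose bound is in fact attainable under the true marginals, which is exactly what the structural invariant guarantees, so the lower bound and the achieved value meet with no gap.
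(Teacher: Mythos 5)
Your proposal is correct and takes essentially the same route as the paper's proof: both reduce to a fixed subset--paper pair, observe that the mean of the count $X$ of same-subset reviewers is pinned down by the marginals $\fracas$, invoke the algorithm's invariant (established in Section~\ref{secinstsamplingalgo}) that each subset--paper load is rounded to one of the two integers straddling its initial value, and conclude that the resulting two-point distribution uniquely minimizes the expected number of pairs, with simultaneity coming for free because the invariant holds for every subset--paper pair at once. The only difference is cosmetic: where you obtain the per-pair lower bound by Jensen's inequality applied to the piecewise-linear interpolation of $\binom{\cdot}{2}$, the paper verifies minimality of the straddling two-point law by directly computing and comparing $\mathbb{E}[X^2]$ in the integral and non-integral cases of $\mu$.
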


We present the sampling algorithm that realizes these results here, and prove the guarantees stated in Lemma~\ref{samplinglemma} and Corollary~\ref{samplingcor} in Appendix~\ref{samplingthmproof}. This algorithm is a modification of the sampling algorithm from Theorem~\ref{bvnthm} presented earlier as Algorithm~\ref{algsampling}. 

We first provide some high-level intuition about the modifications to Algorithm~\ref{algsampling}. 
For any fractional assignment matrix $\fracas$, for any subset $\adinst$ and paper $\adpap$, the expected number of reviewers from subset $\adinst$ assigned to paper $\adpap$ is $\sum_{\adrev \in \adinst} F_{\adrev \adpap}$. 
This is equal to the initial load from subset $\adinst$ on paper $\adpap$ in Algorithm~\ref{algsampling} (that is, the sum of the flow on all edges from reviewers in subset $\adinst$ to paper $\adpap$).
Note that at Algorithm~\ref{algsampling}'s conclusion, when all edges are integral, the load from subset $\adinst$ on paper $\adpap$ is equal to the number of reviewers from subset $\adinst$ assigned to paper $\adpap$.
Therefore, if the fractional assignment $\fracas$ is such that the initial expected number of reviewers from subset $\adinst$ assigned to paper $\adpap$ is no greater than $1$ (as stated in part (i) of Lemma~\ref{samplinglemma}), we want to keep the load from subset $\adinst$ on paper $\adpap$ close to its initial value so that the final number of reviewers from subset $\adinst$ assigned to paper $\adpap$ is also no greater than $1$.
With this reasoning, we modify Algorithm~\ref{algsampling} so that in each iteration,  it ensures that the total load on each paper from each subset is unchanged if originally integral and is never moved past the closest integer in either direction if originally fractional. 


\begin{algorithm}[t]
\begin{algorithmic}[1]
\State Construct the set of undirected edges $E_U \gets E \cup \{(v, u) \mid (u, v) \in E\}$
\State Construct the undirected flow function $f_U: E_U \to \mathbb{R}$ as $f_U((u, v)) \gets \begin{cases} f((u, v)) & \text{if } (u, v) \in E \\ f((v, u)) & \text{otherwise} \end{cases}$
\State Find arbitrary edge $(u, v) \in E$ such that $f((u, v)) \not\in \mathbb{Z}$
\State $C \gets \{ (u, v) \}$
\State $D_1 \gets \{\}$, $D_2 \gets \{\}$
\While {$v$ has not previously been visited}
    \State Visit $v$
    \If {$u \in \revset$ and $v \in \papset$} \label{linecase1}
        \State Set $\adinst \in \{\adinst_1, \dots, \adinst_\numinst\}$ such that $u \in \adinst$
        \If {$\exists w \in \adinst \setminus \{ u \}$ such that $(v, w) \in E_U$ and $f_U((v, w)) \not\in \mathbb{Z}$}
            \State Find such a $w$
        \Else \label{linecase2}
            \State For some $J \in \{\adinst_1, \dots, \adinst_\numinst\} \setminus \{\adinst\}$ such that $\sum_{\adrev \in J} f((\adrev, v)) \not\in \mathbb{Z}$, find $w \in J$ such that $(v, w) \in E_U$ and $f_U((v, w)) \not\in \mathbb{Z}$
            \State $D_1 \gets D_1 \cup \{ \sum_{\adrev \in \adinst} f((\adrev, v)) \}$ (corresponding to $(u, v)$) \label{lined1}
            \State $D_2 \gets D_2 \cup \{ \sum_{\adrev \in J} f((\adrev, v)) \}$ (corresponding to $(v, w)$) \label{lined2}
        \EndIf
    \Else 
        \State Find $w \in V \setminus \{ u \}$ such that $(v, w) \in E_U$ and $f_U((v, w)) \not\in \mathbb{Z}$
    \EndIf 
    \State $C \gets C \cup \{ (v, w) \}$
    \State $u \gets v$
    \State $v \gets w$
\EndWhile
\State Set $e_1$ as the first edge in $C$ leaving $v$
\State Set $e_{-1}$ as the last edge in $C$ (entering $v$)
\State Remove edges preceding $e_1$ from $C$, and remove the corresponding elements from $D_1$ and $D_2$
\If {$v \in \papset$ and $\exists \adinst \in \{\adinst_1, \dots, \adinst_\numinst\}$ such that $e_1 \in \{v \} \times \adinst$ and $e_{-1} \in \adinst \times \{v\}$}
    \State Remove the elements corresponding to $e_1$ and $e_{-1}$ from $D_1$ and $D_2$
\EndIf 
\If {$e_1 \not\in E$}
    \State Swap $D_1$ and $D_2$ 
\EndIf
\State Replace each edge in $C$ from $E_U$  with the corresponding edge from $E$
\end{algorithmic}
\caption{Loop-finding subroutine (replacing Line~\ref{linecyclefind} in Algorithm~\ref{algsampling}).} \label{algsamplingmod}
\end{algorithm}

The algorithm realizing Lemma~\ref{samplinglemma} and Corollary~\ref{samplingcor} is obtained by changing three lines in Algorithm~\ref{algsampling}, as follows:
\begin{itemize}
    \item Line~\ref{linecyclefind} is replaced with the subroutine in Algorithm~\ref{algsamplingmod}.
    \item Line~\ref{lineflowamt1} is changed to: $\alpha \gets \min\left( \min_{e \in A}  f(e), \min_{e \in B} \capac(e) - f(e), \min_{t \in D_1} t - \lfloor t \rfloor, \min_{t \in D_2} \lceil t \rceil - t \right)$.
    \item Line~\ref{lineflowamt2} is changed to: $\beta \gets \min\left( \min_{e \in A}  \capac(e) - f(e), \min_{e \in B} f(e),  \min_{t \in D_1} \lceil t \rceil - t  , \min_{t \in D_2} t - \lfloor t \rfloor \right)$.
\end{itemize}

The primary modification we make to Algorithm~\ref{algsampling} is replacing Line~\ref{linecyclefind} with the subroutine in Algorithm~\ref{algsamplingmod}. In each iteration, when we look for an undirected cycle of fractional edges in the graph, we now choose the cycle carefully rather than arbitrarily. We find a cycle by starting from an arbitrary fractional edge in the graph and walk along adjacent fractional edges (ignoring direction) until we repeat a previously-visited vertex. As we do this, whenever we take a fractional edge from a reviewer in subset $\adinst$ into paper $\adpap$, there are two cases. 
\begin{itemize}
\item Case 1: If there exists a different fractional edge from paper $\adpap$ to subset $\adinst$ (Line~\ref{linecase1} in Algorithm~\ref{algsamplingmod}), we take this edge next. Note that if the total load from subset $\adinst$ on paper $\adpap$ is integral, such an edge must exist.
\item Case 2: Otherwise (Line~\ref{linecase2} in Algorithm~\ref{algsamplingmod}), we must take a fractional edge from paper $\adpap$ to some other subset $J$. In this case, the total load from subset $\adinst$ on paper $\adpap$ must not be integral. We choose the subset $J$ so that the total load from subset $J$ on paper $\adpap$ is also not integral. Such a subset must exist since the total load on paper $\adpap$ is always integral. We keep track of both the total load from $\adinst$ and from $J$ on $\adpap$, for every occurrence of this case along the cycle (Lines~\ref{lined1} and \ref{lined2} in Algorithm~\ref{algsamplingmod}). 
\end{itemize}
In Case 1, no matter how much flow is pushed on the cycle, the total load from subset $\adinst$ on paper $\adpap$ will be preserved exactly. However, due to Case 2, we must modify the choice of how much flow to push on the cycle to ensure that the loads are preserved as desired. Specifically, we only push flow in a given direction on the cycle until the total load for either subset $\adinst$ or $J$ on paper $\adpap$ is integral, for any $\adinst, J, \adpap$ found in Case 2. The total loads from each subset on each paper found in Case 2 are saved in either set $D_1$ or set $D_2$ depending on the direction of the corresponding edges in the cycle, and each subset-paper pair with an edge corresponding to an element of $D_1$ or $D_2$ has only that one edge in the cycle. If the total (fractional) load from subset $\adinst$ on paper $\adpap$ is $t$, then only $\lceil t \rceil - t$ additional flow can be added to any edge from subset $\adinst$ to paper $\adpap$ before the load becomes integral; similarly, only $t - \lfloor t \rfloor$ flow can be removed from any edge before the load becomes integral. This leads to the stated changes to Lines~\ref{lineflowamt1} and \ref{lineflowamt2} in Algorithm~\ref{algsampling}.

Therefore, on each iteration, we push flow until either the flow on some edge is integral (as in the original algorithm), or until the total load on some paper from some subset is integral. This implies that the algorithm still terminates in a finite number of iterations. In addition, by the end of the algorithm, the total load on each paper from each subset is preserved exactly if originally integral and rounded in either direction if originally fractional, as desired. 

The time complexity of this modified algorithm is identical to that of the original algorithm from Theorem~\ref{bvnthm}, since finding a cycle takes the same amount of time (if a fractional adjacency list for each subset is used) and only a maximum of $O(\numrev)$ extra iterations are performed (if an subset's total load becomes integral rather than an edge's flow). Therefore, the algorithm is overall $O(\numpap \numrev (\numpap + \numrev))$.

\subsubsection{Finding the Optimal Partition-Constrained Fractional Assignment} \label{secinstlpalgo}
Lemma~\ref{samplinglemma} provides necessary and sufficient conditions for the fractional assignment matrices for which it is possible to prevent all pairs of same-subset reviewers from being assigned to the same paper. 
Therefore, to find an optimal fractional assignment with this property, we just need to add $\numinst\numpap$ constraints to $\mathcal{LP}1$. We call this new LP $\mathcal{LP}2$:
\begin{align}
\argmax_{\fracas \in \mathbb{R}^{\numrev \times \numpap}} \quad & \sum_{\adpap \in \papset} \sum_{\adrev \in \revset} \simmat_{\adrev\adpap} \fracas_{\adrev\adpap}\\
\text{subject to } &\text{Constraints  (\ref{cnt:LP1_range}--\ref{cnt:LP1_Q}) from $\mathcal{LP}1$ and}\nonumber \\
& \sum_{\adrev \in \adinst} \fracas_{\adrev\adpap} \leq 1 \qquad \qquad \forall \adinst \in \{\adinst_1, \dots, \adinst_\numinst\}, \forall \adpap \in \papset.\label{cnt:inst}
\end{align}

The solution to $\mathcal{LP}2$ when paired with the sampling algorithm from Section~\ref{secinstsamplingalgo} never assigns two reviewers from the same subset to the same paper. Furthermore, since any fractional assignment $\fracas$ not obeying Constraint~(\ref{cnt:inst}) will have a strictly positive probability of assigning two reviewers from the same subset to the same paper, $\mathcal{LP}2$ finds the optimal fractional assignment with this guarantee. This completes the algorithm for the Partition-Constrained Problem.

Additionally, Corollary~\ref{samplingcor} shows that the sampling algorithm from Section~\ref{secinstsamplingalgo} is optimal in the expected number of same-subset reviewer pairs, for any fractional assignment. 
If the guarantee of entirely preventing same-subset reviewer pairs is not strictly required, Constraint~(\ref{cnt:inst}) in $\mathcal{LP}2$ can be loosened (constraining the subset loads to a higher value) without removing it entirely. For the resulting fractional assignment $\fracas$, the sampling algorithm from Section~\ref{secinstsamplingalgo} still minimizes the expected number of pairs of reviewers from any subset on any paper, as compared to any other sampling algorithm implementing $\fracas$. Since the subset loads are still constrained, the expected number of same-subset reviewer pairs will be lower than in the solution to the Pairwise-Constrained Problem (at the cost of some expected sum-similarity). We examine this tradeoff experimentally in Section~\ref{secexps}.

\section{Experiments} \label{secexps}

We test our algorithms on several real-world datasets. 
The first real-world dataset is a similarity matrix recreated from ICLR 2018 data in \cite{xu2018strategyproof}; this dataset has $\numrev = 2435$ reviewers and $\numpap = 911$ papers. We also run experiments on  similarity matrices created from reviewer bid data for three AI conferences from PrefLib dataset MD-00002 \cite{MaWa13a}, with sizes $(\numrev = 31, \numpap = 54)$, $(\numrev = 24, \numpap = 52)$, and $(\numrev = 146, \numpap = 176)$ respectively. For all three PrefLib datasets, we transformed ``yes,'' ``maybe,'' and ``no response'' bids into similarities of $4$, $2$, and $1$ respectively, as is often done in practice~\cite{shah2018design}. As done in \cite{xu2018strategyproof}, we set loads $\revload = 6$ and $\papload = 3$ for all datasets since these are common loads for computer science conferences (except on the PrefLib2 dataset, for which we set $\revload=7$ for feasibility). 

We run all experiments on a computer with $8$ cores and $16$ GB of RAM, running Ubuntu 18.04 and using Gurobi 9.0.2~\cite{gurobi} to solve the LPs. Our algorithm for the Pairwise-Constrained Problem takes an average of $41$ seconds to complete on ICLR; our algorithm for the Partition-Constrained Problem takes an average of $45$ seconds. As expected, the running time is dominated by the time taken to solve the LP.

\begin{figure*}[t!] 
    \centering
    \begin{subfigure}{0.65\textwidth}\includegraphics[width=1\textwidth]{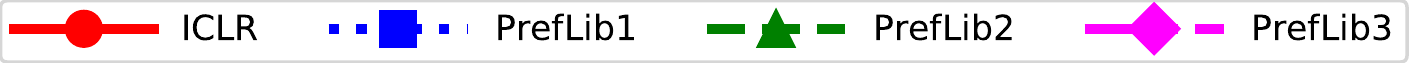}\label{figlegend} \end{subfigure} \\
    \begin{subfigure}{0.45\textwidth}\includegraphics[width=1\textwidth]{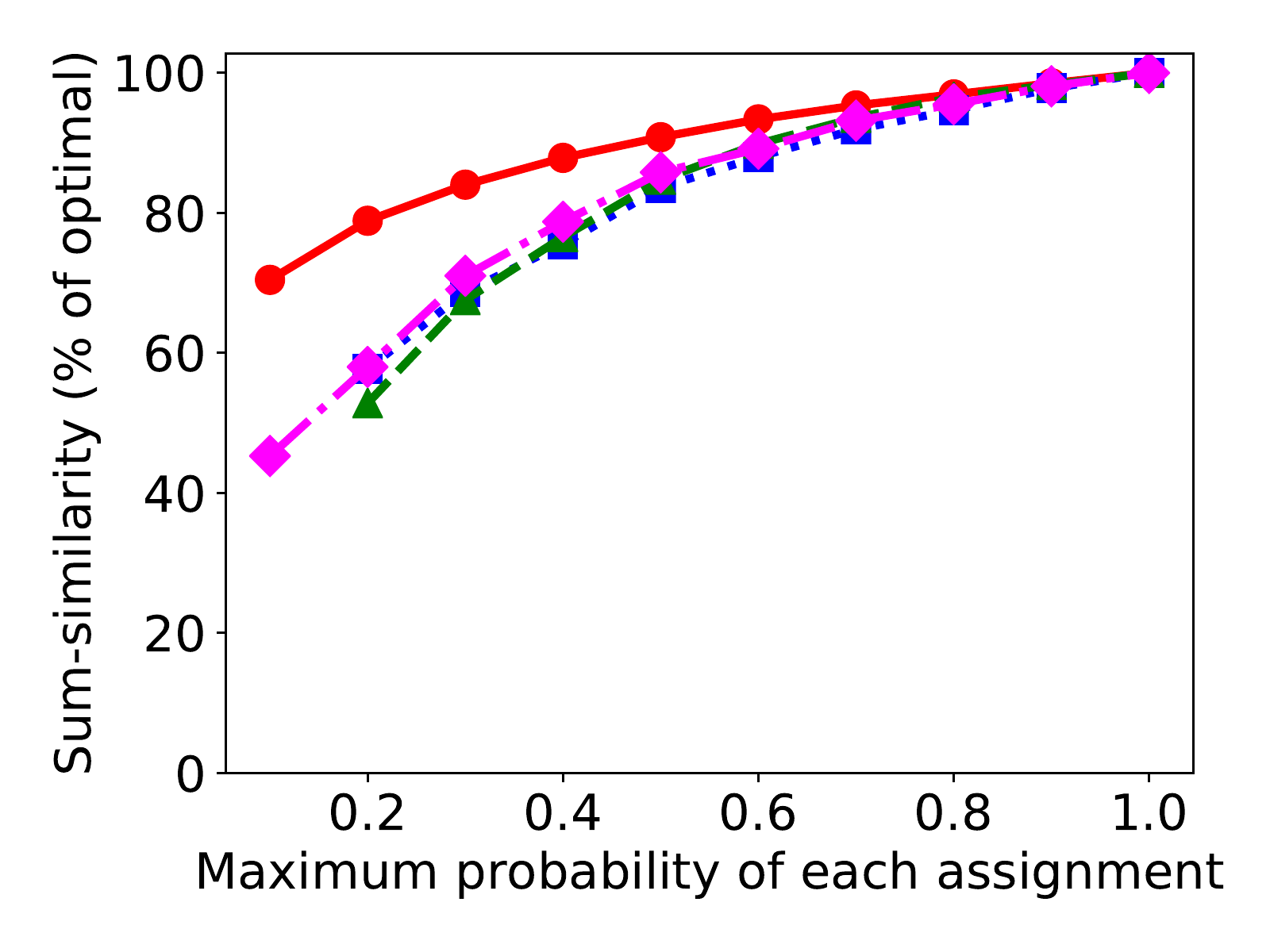}\caption{Pairwise-Constrained Problem}\label{figsim} \end{subfigure}\quad
    \begin{subfigure}{0.45\textwidth}\includegraphics[width=1\textwidth]{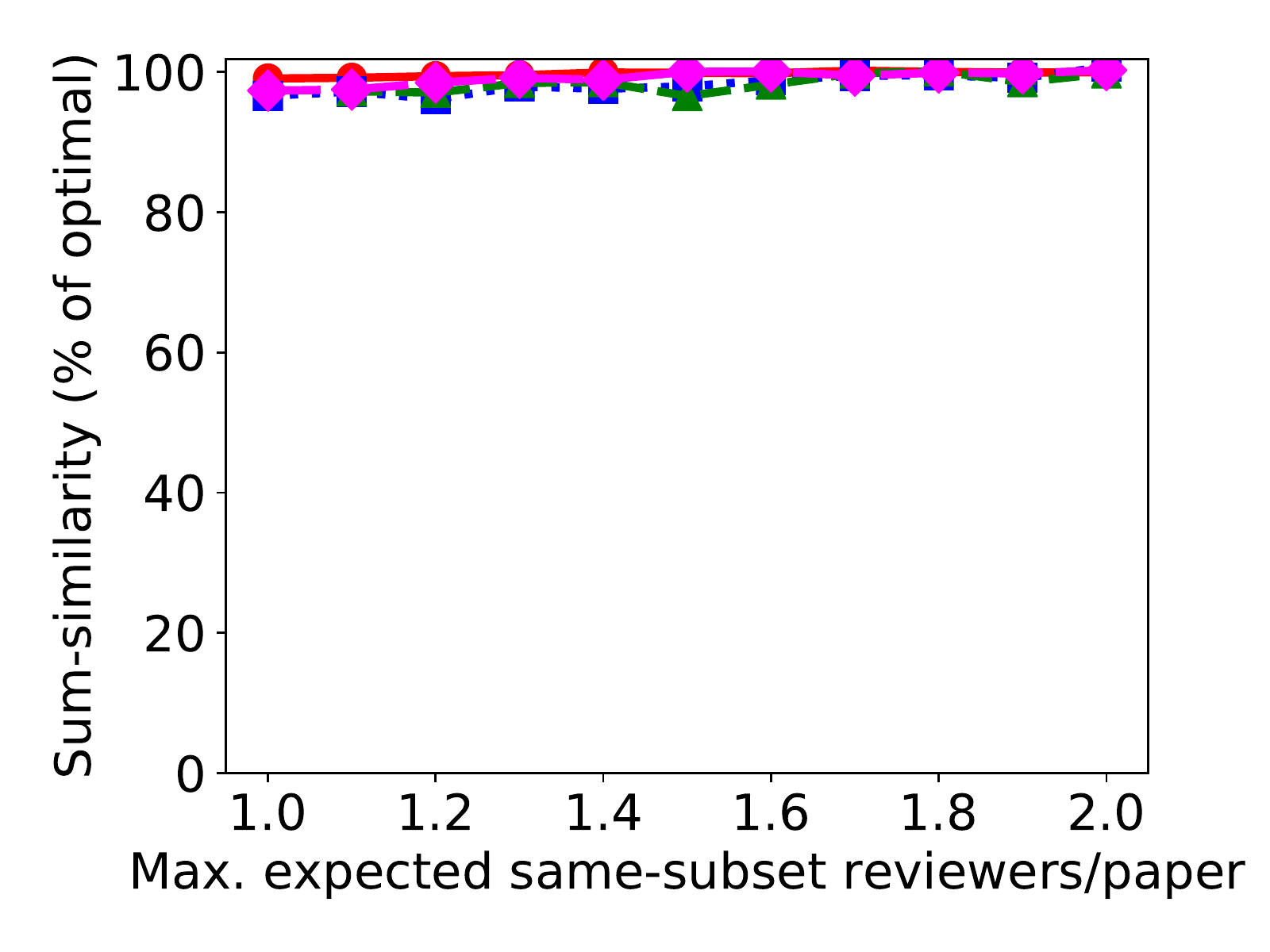}\caption{Partition-Constrained Problem with three random subsets}\label{figinst} \end{subfigure}
    \caption{Experimental results on four conference datasets.} \label{figresults}
\end{figure*}

\subsection{Quality of Resulting Assignments}
We first study our algorithm for the Pairwise-Constrained Problem, as described in Section~\ref{secbasicalgo}. In this setting, program chairs must make a tradeoff between the quality of the output assignments and guarding against malicious reviewers or reviewer de-anonymization by setting the values of the maximum-probability matrix $\maxprob$. We  investigate this tradeoff on real datasets. All results in this section are averaged over $10$ trials with error bars plotted representing the standard error of the mean, although they are sometimes not visible since the variance is very low.

In Figure~\ref{figsim}, we set all entries of the maximum-probability-matrix $\maxprob$ equal to the same constant value $\maxprobconst$ (varied on the x-axis), and observe how the sum-similarity value of the assignment computed via our algorithm from Section~\ref{secbasicalgo} changes as $\maxprobconst$ increases from $0.1$ to $1$ with an interval of $0.1$. We report the sum-similarity as a percentage of the unconstrained optimal solution's objective. This unconstrained optimal solution maximizes sum-similarity through a deterministic assignment as is popularly done today \cite{charlin2012framework, long2013good, goldsmith2007ai, tang2010expertise, flach2010novel,lian2018conference,kobren19localfairness}, and does not address the aforementioned challenges. We see that our algorithm trades off the maximum probability of an assignment gracefully against the sum-similarity on all datasets. For instance, with $\maxprobconst = 0.5$, our algorithm achieves $90.8\%$ of the optimal objective value on the ICLR dataset. In practice, this would allow the program chairs of a conference to limit the chance that any malicious reviewer is assigned to their desired paper to $50\%$ without suffering a significant loss of assignment quality. When $\maxprobconst$ is too small, a feasible assignment may not exist in some datasets (e.g., $\maxprobconst=0.1$ for PrefLib2). 

We next test our algorithm for the Partition-Constrained Problem discussed in Section~\ref{secpartitionproblem}. In this algorithm, program chairs can navigate an additional tradeoff between the number of same-subset reviewers assigned to the same paper and the assignment quality; we investigate this tradeoff here. On ICLR, we fix $\maxprobconst = 0.5$ and randomly assign reviewers to subsets of size $15$, using this as our partition of $\revset$ (since the dataset does not include any reviewer information). Each subset represents a group of reviewers with close associations, such as reviewers from the same institution. Our algorithm is able to achieve $100\%$ of the optimal objective for the Pairwise-Constrained Problem with $\maxprobconst = 0.5$ while preventing any pairs of reviewers from the same subset from being assigned to the same paper. 

Since our algorithm achieves the full possible objective in this setting, we now run experiments with a considerably more restrictive partition constraint. In Figure~\ref{figinst}, we show an extreme case where we randomly assign reviewers to $3$ subsets of equal size (sizes $811$, $11$, $8$ and $48$ on ICLR and the PrefLib datasets, respectively, with the remainder assigned to a dummy fourth subset), again fixing $\maxprobconst = 0.5$. We then gradually loosen the constraints on the expected number of same-subset reviewers assigned to the same paper by increasing the constant in Constraint~(\ref{cnt:inst}) from $1$ to $2$ in increments of $0.1$, shown on the x-axis. We plot the sum-similarity objective of the resulting assignment, expressed as a percentage of the optimal non-partition-constrained solution's objective (i.e., the solution to the Pairwise-Constrained Problem with $\maxprobconst = 0.5$). 
Even in this extremely constrained case with only a few subsets, we still achieve $99.1\%$ of the non-partition-constrained objective while entirely preventing same-subset reviewer pairs on ICLR.

In Appendix~\ref{addexperiments}, we present results for additional experiments on synthetic similarities, where we find results qualitatively similar to those presented here. We also run experiments for a fairness objective, which we present in Appendix~\ref{fairnessapdx}.

\subsection{Effectiveness at Preventing Manipulation}

\begin{figure*}[t!] 
    \centering
    \begin{subfigure}{0.5\textwidth}\includegraphics[width=1\textwidth]{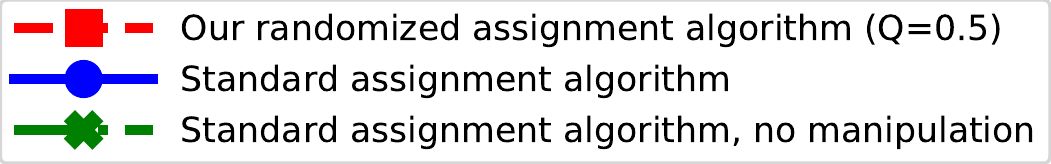}\label{figmaniplegend} \end{subfigure} \\
    \begin{subfigure}{0.45\textwidth}\includegraphics[width=1\textwidth]{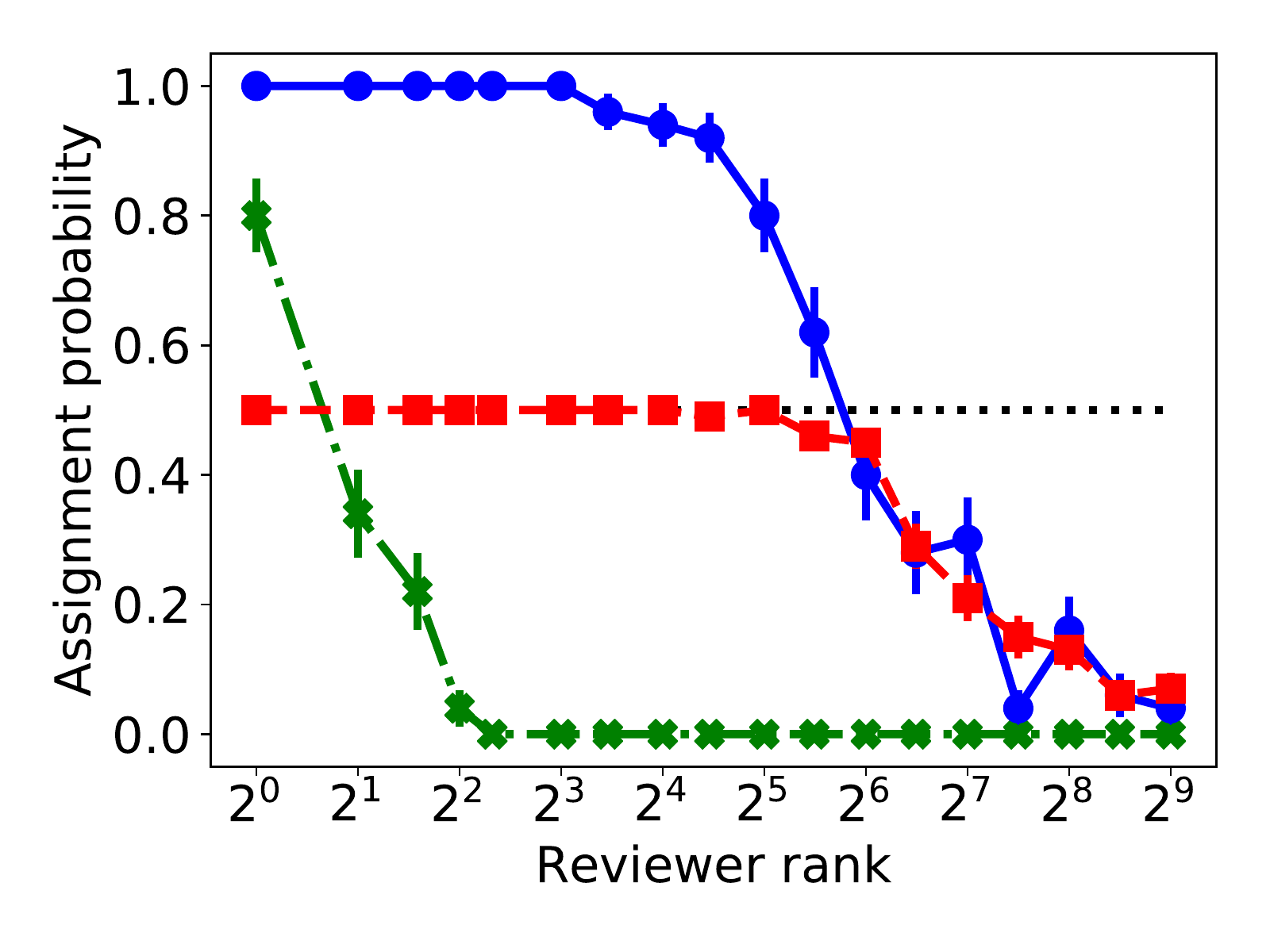}\caption{Bidding scale $\bidscale=2$}\label{figmanip2} \end{subfigure}\quad
    \begin{subfigure}{0.45\textwidth}\includegraphics[width=1\textwidth]{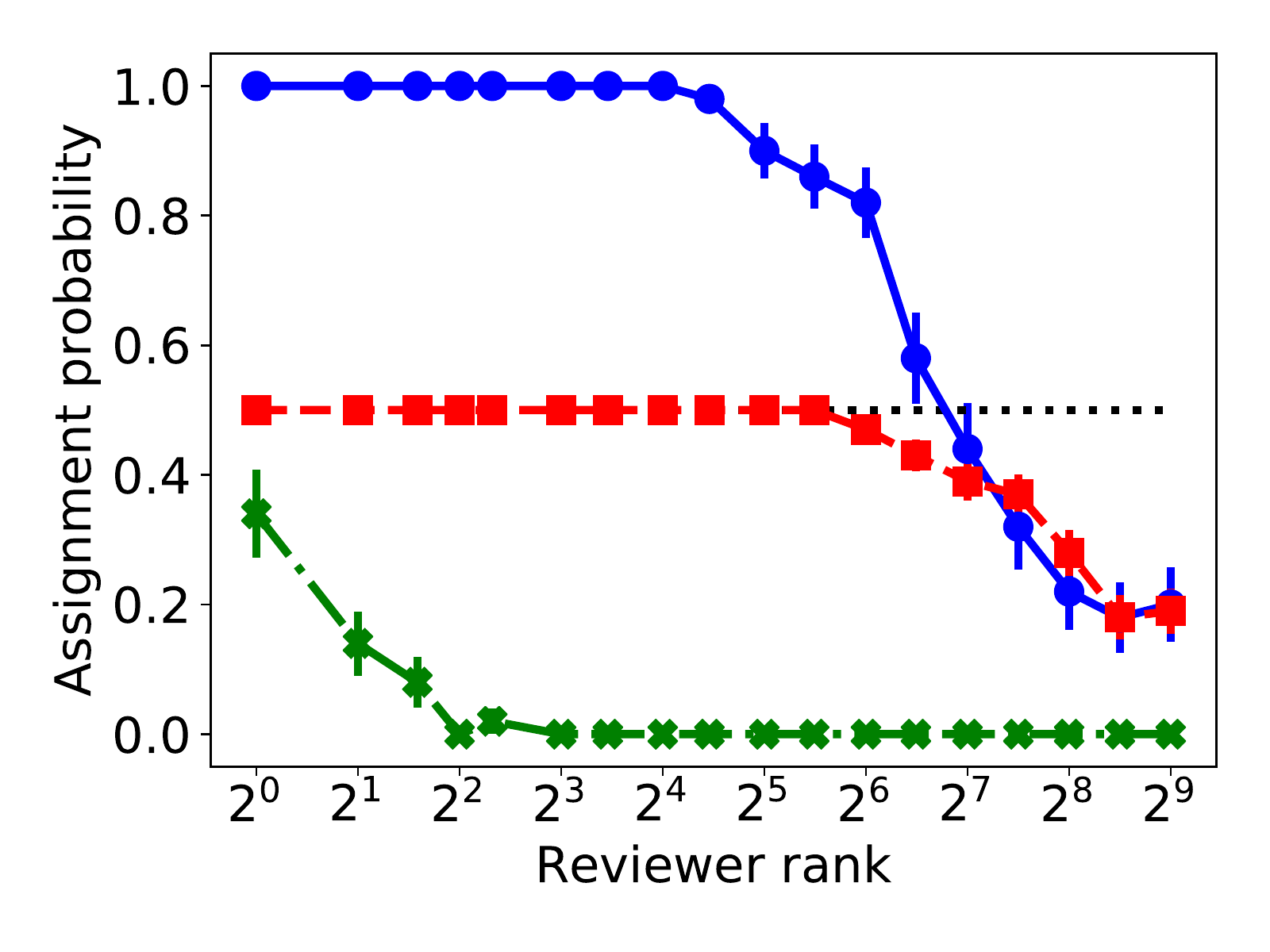}\caption{Bidding scale $\bidscale=4$}\label{figmanip4} \end{subfigure}
    \caption{Effectiveness of bidding manipulation on the ICLR dataset. One malicious reviewer manipulates their bids to get assigned to a target paper. The probability of the malicious reviewer-target paper assignment varies on the y-axis as the rank of the malicious reviewer's pre-bid similarity with the target paper changes on the x-axis.} \label{figmanip}
\end{figure*}

We now describe experiments evaluating the effectiveness of our algorithm at preventing manipulation on the ICLR dataset against a simulated reviewer bidding model. We assume that there is one malicious reviewer who is attempting to maximize their chances of being assigned to a target paper solely through bidding (and not through other means). Since the ICLR similarities are reconstructed purely from the text similarity with each reviewers' past work and do not contain any bidding, we supplement them with synthetic bids. Specifically, each reviewer $\adrev$ chooses a bid $b_{\adrev \adpap} \in \{-1, 0, 1\}$ for each paper $\adpap$, indicating ``not interested,'' ``neutral,'' or ``interested'' respectively. Based on the similarity function used in the NeurIPS 2016 conference~\cite{shah2018design}, we compute the final similarity between reviewer $\adrev$ and paper $\adpap$ as $S'_{\adrev \adpap} = \bidscale^{b_{\adrev \adpap}} S_{\adrev \adpap}$, where $S_{\adrev \adpap}$ is the text similarity from the ICLR dataset and $\bidscale$ is a fixed scale parameter.

In our experiment, the malicious reviewer bids $1$ on their target paper and $-1$ on all other papers. The other (honest) reviewers bid according to a simple randomized model constructed to match characteristics of the bidding observed in NeurIPS 2016~\cite{shah2018design}. We divide the reviewers uniformly at random into three groups. The first group contains $20\%$ of the reviewers, who all bid $0$ on all papers. The second group contains $50\%$ of the reviewers, who bid non-zero on a low number of papers. These reviewers consider each paper within the $10\%$ of papers that have highest text similarity with them, and independently choose to bid non-zero on each one with probability $0.016$. If a paper is selected to bid non-zero, the bid is chosen from $\{-1, 1\}$ with uniform probability. The third group contains $30\%$ of the reviewers, who bid non-zero on a high number of papers. They follow the same bidding procedure as the second group, but bid with probability $0.24$. 

The results of this experiment are shown in Figure~\ref{figmanip}. We choose a target paper uniformly at random, and choose the malicious reviewer to be the reviewer with the $x$\textsuperscript{th} highest text similarity with that paper (varying $x$ on the x-axis). Note that the x-axis is on a log-scale. We then have all reviewers bid in the manner described above, and compute the assignment with either the standard deterministic assignment algorithm described in Section~\ref{secproblem} or our randomized assignment algorithm for the Pairwise-Constrained Problem, setting all entries of $\maxprob$ to $0.5$. We then observe the probability that the malicious reviewer is assigned to the target paper (that is, the probability with which the manipulation is successful), which is in $\{0, 1\}$ for the deterministic algorithm but can be non-integral for our algorithm. For each point on the x-axis, we average results over $50$ choices of target paper, giving an overall success rate for the manipulation under a uniform choice of papers (reported on the y-axis, with error bars plotted representing the standard error of the mean). For comparison, we also plot the case where only the honest reviewers bid and the malicious reviewer does not bid. 

There are three key takeaways from this experiment. First, 
we see that {when a reviewer does not bid, their assignment probability is low} for any reviewer not ranked in the top 4 for that paper in terms of the text similarity. Second, when the malicious reviewer does bid, {the manipulation has a high success rate  under the standard assignment algorithm}. For example, the $8$\textsuperscript{th} ranked reviewer for any paper is never assigned if they do not bid, but with bids they can manipulate in order to guarantee their assignment. Moreover, even the $100$\textsuperscript{th} ranked reviewer has a has a decent probability (above $0.25$) of getting assigned the target paper if the reviewer bids maliciously. This indicates that manipulation from reviewers is quite powerful in standard assignment algorithms, potentially compromising the integrity of the assignment.  Third, {our algorithm always limits the probability of successful manipulation} to the desired level of $0.5$, reflecting the theoretical guarantees presented earlier in the paper. For malicious reviewers who have low text similarity with the target paper (e.g., reviewers from a different subject area), our algorithm occasionally gives the manipulation a marginally higher probability to succeed as compared to the standard assignment algorithm since the set of possible reviewers for each paper is larger. However, manipulation from these low-similarity reviewers is unlikely to succeed in the first place (with probability below the desired limit of $0.5$), and it is envisaged to be easier for program chairs to manually spot unusual bids from reviewers outside of a paper's subject area.

\section{Discussion}
We have presented here a framework and a set of algorithms for addressing three challenges of practical importance to the peer review process: untruthful favorable reviews, torpedo reviewing, and reviewer de-anonymization on the release of assignment data. By design, our algorithms are quite flexible to the needs of the program chairs, depending on which challenges they are most concerned with addressing. Our empirical evaluations demonstrate some of the tradeoffs that can be made between total similarity and maximum probability of each paper-reviewer pair or between total similarity and  number of reviewers from the same subset on the same paper. The exact parameters of the algorithm can be set based on how the program chairs weigh the relative importance of each of these factors. Note that an empirical evaluation of exactly how much our algorithm reduces manipulation in a real conference is not possible, since the ground truth of which reviewers were manipulating their assignments is not known.

This work leads to a number of open problems of interest. First, since the general Triplet-Constrained Problem is NP-hard, we considered one special structure---the Partition-Constrained Problem---of practical relevance. A direction for future research is to find additional special cases under which optimizing over constraints on the probabilities of reviewer-pair-to-paper assignments is feasible. For example, there may be a known network of reviewers where program chairs wish to prevent connected reviewers from being assigned to the same paper.  
A second problem of interest is to develop methods to detect potentially malicious reviewer-paper pairs before papers are assigned (e.g., based on the bids). 
Finally, this work does not address the problem of reviewers colluding with each other to give dishonest favorable reviews after being assigned to each others' papers; we leave this issue for future work.


\section*{Acknowledgments}
The research of Steven Jecmen and Nihar Shah was supported in part by NSF CAREER 1942124. The research of Steven Jecmen and Fei Fang was supported in part by NSF Award IIS-1850477. The research of Hanrui Zhang and Vincent Conitzer was supported in part by NSF Award IIS-1814056.

\bibliographystyle{unsrt}
\bibliography{bibtex}

~\\~\\
\appendix
\noindent{\LARGE \bf Appendices}

\section{Stochastic Fairness Objective} \label{fairnessapdx}
An alternate objective to the sum-similarity objective has been studied in past work \cite{stelmakh2018peerreview4all, Garg2010papers}, aiming to improve the fairness of the assignment with respect to the papers. Rather than maximizing the sum-similarity across all papers, this objective maximizes the minimum total similarity assigned to any paper:
\begin{align*}
\argmax_{\intas \in \mathbb{R}^{\numrev \times \numpap}} \quad & \min_{\adpap \in \papset} \sum_{\adrev \in \revset} \simmat_{\adrev\adpap} \intas_{\adrev\adpap} \\
\text{subject to } \quad & \intas_{\adrev\adpap} \in \{0, 1\} & \forall \adrev \in \revset, \forall \adpap \in \papset &\\
&\sum_{\adpap \in \papset} \intas_{\adrev\adpap} \leq \revload & \forall \adrev \in \revset &\\
& \sum_{\adrev \in \revset} \intas_{\adrev\adpap} = \papload & \forall \adpap \in \papset &.
\end{align*}
Due to the minimum in the objective, this problem is NP-hard \cite{garg2010assigning}; the paper \cite{stelmakh2018peerreview4all} presents an algorithm to find an approximate solution.

\begin{figure*}[t!] 
    \centering
    \begin{subfigure}{0.65\textwidth}\includegraphics[width=1\textwidth]{images/legend.pdf}\end{subfigure} \\
    \begin{subfigure}{0.45\textwidth}\includegraphics[width=1\textwidth]{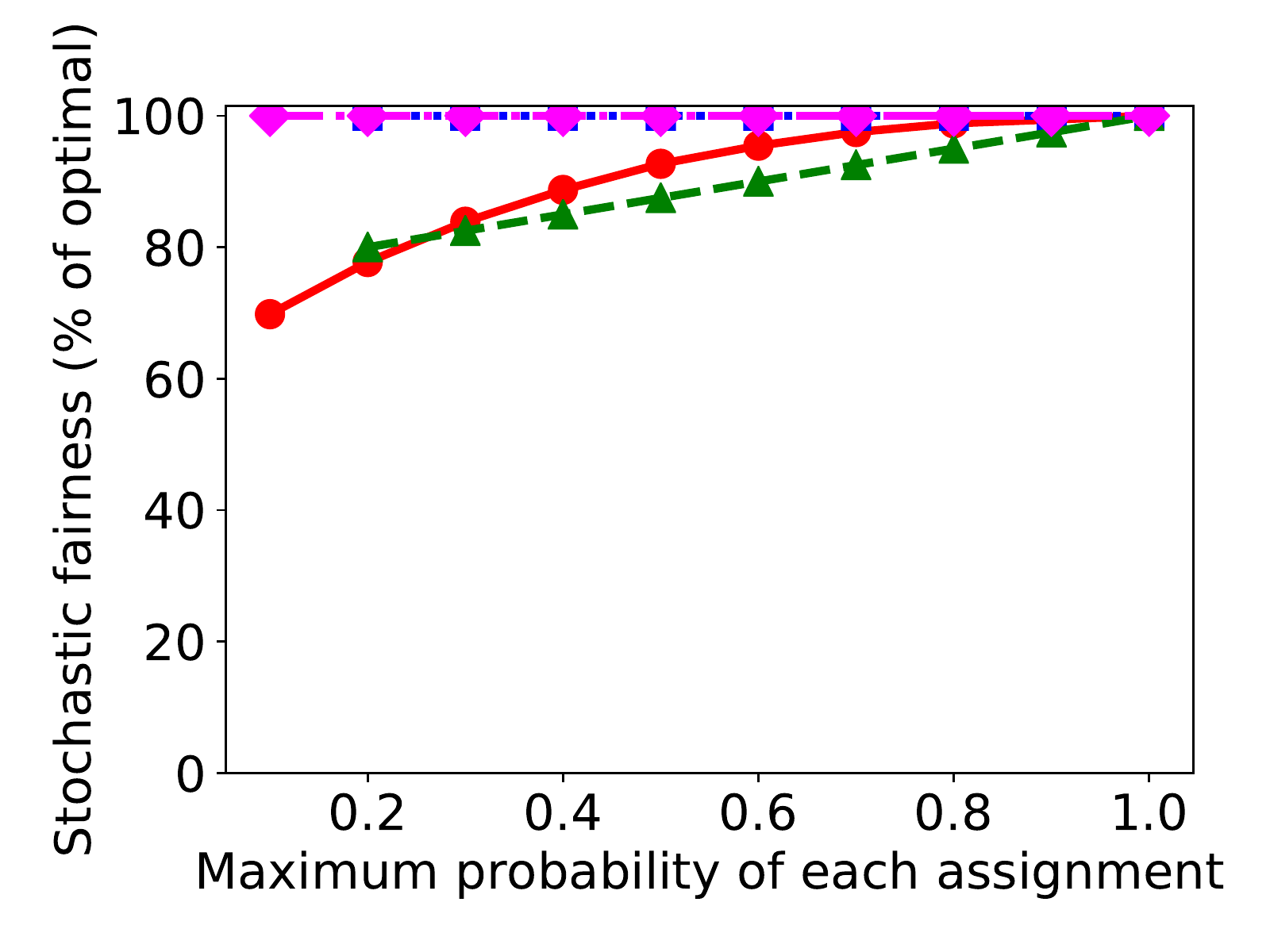} \end{subfigure}
    \caption{Experimental results for the Fair Pairwise-Constrained Problem.} \label{figfair}
\end{figure*}

In our setting of randomized assignments, we consider an analogous fairness objective, which we call the stochastic fairness objective: $\min_{\adpap \in \papset} \mathbb{E}\left[ \sum_{\adrev \in \revset} \simmat_{\adrev\adpap} \intas_{\adrev\adpap} \right]$. The problem involving this objective is defined as follows.
\begin{defn}[Fair Pairwise-Constrained Problem] The input to the problem is a similarity matrix $\simmat$ and a matrix $\maxprob \in [0, 1]^{\numrev \times \numpap}$. The goal is to find a randomized assignment of papers to reviewers that maximizes $\min_{\adpap \in \papset} \mathbb{E}\left[ \sum_{\adrev \in \revset} \simmat_{\adrev\adpap} \intas_{\adrev\adpap} \right]$ subject to the constraints that $\mathbb{P}[\intas_{\adrev\adpap} = 1] \leq \maxprob_{\adrev\adpap}, \forall \adrev \in \revset, \adpap \in \papset$.
\end{defn}
This problem definition is identical to that of the Pairwise-Constrained Problem (Definition~\ref{defnpairwise}), with the exception that the objective to maximize is now the stochastic fairness objective rather than the sum-similarity. Note that this objective is not equal to the ``expected fairness'' (i.e., $\mathbb{E}\left[ \min_{\adpap \in \papset} \sum_{\adrev \in \revset} \simmat_{\adrev\adpap} \intas_{\adrev\adpap} \right]$), but by Jensen's inequality it is an upper bound on the expected fairness.

Fortunately, this problem is solvable efficiently, as the following theorem states.
\begin{thm}
There exists an algorithm which returns an optimal solution to the Fair Pairwise-Constrained Problem in poly(\numrev, \numpap) time.
\end{thm}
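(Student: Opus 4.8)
The plan is to mirror the two-part structure of the proof of Theorem~\ref{bvnthm}: first solve a linear program to find an optimal fractional assignment, then implement it via the sampling algorithm of Algorithm~\ref{algsampling}. The crucial observation is that the stochastic fairness objective depends only on the fractional assignment matrix and not on the full joint distribution of $\intas$: by linearity of expectation, $\mathbb{E}\left[\sum_{\adrev \in \revset} \simmat_{\adrev\adpap} \intas_{\adrev\adpap}\right] = \sum_{\adrev \in \revset} \simmat_{\adrev\adpap} \fracas_{\adrev\adpap}$, where $\fracas_{\adrev\adpap} = \mathbb{P}[\intas_{\adrev\adpap} = 1]$ is the marginal assignment probability. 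Hence the objective $\min_{\adpap \in \papset} \mathbb{E}\left[\sum_{\adrev \in \revset} \simmat_{\adrev\adpap}\intas_{\adrev\adpap}\right]$ equals $\min_{\adpap \in \papset} \sum_{\adrev \in \revset} \simmat_{\adrev\adpap} \fracas_{\adrev\adpap}$, a function of $\fracas$ alone.

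First I would note, exactly as in Section~\ref{secbasicalgo}, that the feasible marginals $\fracas$ are precisely the fractional assignment matrices satisfying the load and probability constraints (\ref{cnt:LP1_range}--\ref{cnt:LP1_Q}) of $\mathcal{LP}1$: the extension of the Birkhoff-von Neumann theorem from \cite{Budish2009IMPLEMENTINGRA} guarantees every such $\fracas$ is implementable by some randomized assignment, and conversely every randomized assignment yields an $\fracas$ obeying these constraints. Since all randomized assignments associated with the same $\fracas$ achieve the same value of the stochastic fairness objective, optimizing over randomized assignments reduces to optimizing $\min_{\adpap \in \papset} \sum_{\adrev \in \revset} \simmat_{\adrev\adpap}\fracas_{\adrev\adpap}$ over this feasible polytope of $\fracas$.

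The minimum-of-linear-functions objective is concave and piecewise linear, so I would linearize it using the standard epigraph trick: introduce an auxiliary scalar variable $z$ and form the LP that maximizes $z$ subject to constraints (\ref{cnt:LP1_range}--\ref{cnt:LP1_Q}) together with $z \leq \sum_{\adrev \in \revset} \simmat_{\adrev\adpap} \fracas_{\adrev\adpap}$ for every paper $\adpap \in \papset$. This adds one variable and $\numpap$ constraints to $\mathcal{LP}1$, so the program still has $O(\numpap\numrev)$ variables and $O(\numpap\numrev)$ constraints and is therefore solvable in $poly(\numrev,\numpap)$ time.

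Finally, I would feed the optimal $\fracas$ returned by this LP into the sampling algorithm (Algorithm~\ref{algsampling}), which, as shown in Section~\ref{secbasicsamplingalgo}, produces a deterministic assignment whose marginal for each reviewer-paper pair is exactly $\fracas_{\adrev\adpap}$, running in $O(\numpap\numrev(\numpap+\numrev))$ time. Because the objective depends only on these marginals, the resulting randomized assignment attains the LP optimum and hence solves the Fair Pairwise-Constrained Problem. I do not anticipate a serious obstacle here: the entire content of the argument is the conceptual step of recognizing that the fairness objective is a function of $\fracas$ alone, after which the min-as-LP reformulation and the already-established sampling guarantee of Theorem~\ref{bvnthm} do all the remaining work.
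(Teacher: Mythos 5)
Your proposal is correct and follows essentially the same route as the paper: the paper's $\mathcal{LP}3$ is exactly your epigraph reformulation (an auxiliary variable $x$ maximized subject to $x \leq \sum_{\adrev \in \revset} \simmat_{\adrev\adpap} \fracas_{\adrev\adpap}$ for each paper, together with the constraints of $\mathcal{LP}1$), and the paper likewise concludes by applying the sampling algorithm of Section~\ref{secbasicsamplingalgo} to the optimal fractional assignment. Your added remarks on linearity of expectation and the implementability of all fractional assignment matrices via \cite{Budish2009IMPLEMENTINGRA} simply make explicit the reduction the paper invokes implicitly.
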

We now present our algorithm for solving the Fair Pairwise-Constrained Problem, thereby proving the theorem. It proceeds in a similar manner as the algorithm for the Pairwise-Constrained Problem presented in Section~\ref{secbasicalgo}. 

The algorithm first finds an optimal fractional assignment matrix, since the stochastic fairness objective depends only on the marginal probabilities in the fractional assignment matrix. The optimal fractional assignment is found by the following LP, which we call $\mathcal{LP}3$:
\begin{align}
\argmax_{\fracas \in \mathbb{R}^{\numrev \times \numpap}, x \in \mathbb{R}} \quad & x \\
\text{subject to } \quad & 0 \leq \fracas_{\adrev\adpap} \leq 1 & \forall \adrev \in \revset, \forall \adpap \in \papset \\
&\sum_{\adpap \in \papset} \fracas_{\adrev\adpap} \leq \revload & \forall \adrev \in \revset \\
& \sum_{\adrev \in \revset} \fracas_{\adrev\adpap} = \papload & \forall \adpap \in \papset \\
& \fracas_{\adrev\adpap} \leq Q_{\adrev\adpap} & \forall \adrev \in \revset, \forall \adpap \in \papset \\
& x \leq \sum_{\adrev \in \revset} \simmat_{\adrev\adpap} \fracas_{\adrev\adpap} & \forall \adpap \in \papset.
\end{align}
For any $\fracas$, the optimal value of $x$ is always $\min_{\adpap \in \papset} \sum_{\adrev \in \revset} \simmat_{\adrev\adpap} \fracas_{\adrev\adpap}$, the stochastic fairness of $\fracas$. For a fixed $x$, the feasible region of $\fracas$ in $\mathcal{LP}3$ is exactly the space of fractional assignment matrices with stochastic fairness no less than $x$. Therefore, $\mathcal{LP}3$ will find an optimal fractional assignment matrix for the stochastic fairness objective.

Once an optimal fractional assignment matrix has been found, it only remains to sample a deterministic assignment from it. This is done with the sampling algorithm described in Section~\ref{secbasicsamplingalgo}, just as in the Pairwise-Constrained Problem.

We now present some empirical results for this algorithm on the four conference datasets described in Section~\ref{secexps}. We set all entries of $\maxprob$ equal to the same constant value $\maxprobconst$ (varied on the x-axis), and observe how the stochastic fairness objective of the assignment changes as $\maxprobconst$ increases from $0.1$ to $1$ with an interval of $0.1$. Since the expectation is inside a minimum in the objective, the objective cannot be estimated without bias by averaging together the stochastic fairness of sampled deterministic assignments. Due to this difficulty, we plot the exact objective of our randomized assignment (i.e., the optimal objective value of $\mathcal{LP}3$) rather than averaging over multiple samples, and report the objective as a percentage of the unconstrained optimal solution's objective (that is, the algorithm's solution when $\maxprobconst=1$). As Figure~\ref{figfair} shows, our algorithm finds a randomized assignment achieving $92.7\%$ of the optimal fairness objective on the ICLR dataset when $\maxprobconst = 0.5$.

\section{Bad-Assignment Probability Problem Variants} \label{badmatchapdx}
An input to both the Pairwise-Constrained Problem (Definition~\ref{defnpairwise}) and the Partition-Constrained Problem (Definition~\ref{defnpartition}) is the matrix $\maxprob$, where $\maxprob_{\adrev\adpap}$ denotes the maximum probability with which reviewer $\adrev$ should be assigned to paper $\adpap$. In practice, program chairs can set the values in this matrix based on their own beliefs about each reviewer-paper pair. However, it may be difficult for program chairs to translate their beliefs about the risk of assigning any reviewer-paper pair into appropriate values for $\maxprob$. In this appendix, we define alternate versions of these problems that allow the program chairs to codify their beliefs in a different way.

Define the assignment of reviewer $\adrev$ to paper $\adpap$ as ``bad'' if reviewer $\adrev$ intends to untruthfully review paper $\adpap$ (either because they intend to give a dishonest favorable review or because they intend to torpedo-review). Further define a matrix $\badprob \in [0, 1]^{\numrev \times \numpap}$ of bad-assignment probabilities, where $\badprob_{\adrev\adpap}$ represents the probability that the assignment of reviewer $\adrev$ to paper $\adpap$ would be a bad assignment; we assume that the events of each reviewer-paper assignment being bad are all independent of each other. The ``true value'' of $\badprob$ may not be known, but it can be set based on the program chairs' beliefs about the reviewers and authors or potentially estimated based on some data from prior conferences. The problem variants we present in the following subsections make use of these bad-assignment probabilities. 

We first consider the problem of limiting the probabilities of bad reviewer-paper assignments. We then consider the problem of limiting the probabilities that bad pairs of reviewers are assigned to the same paper.

\subsection{Handling Bad Reviewer-Paper Assignments}
We define an alternate version of the Pairwise-Constrained Problem using the bad-assignment probabilities:
\begin{defn}[Bad-Assignment Probability Pairwise-Constrained Problem]
The input to the problem is a similarity matrix $\simmat$, a matrix $\badprob \in [0, 1]^{\numrev \times \numpap}$ of bad-assignment probabilities, and a value $\maxbadprob \in [0, 1]$. The goal is to find a randomized assignment of papers to reviewers that maximizes $\mathbb{E}\left[\sum_{\adrev \in \revset} \sum_{\adpap \in \papset} \simmat_{\adrev\adpap} \intas_{\adrev\adpap}\right]$ subject to the constraints that $\badprob_{\adrev\adpap} \mathbb{P}[\intas_{\adrev\adpap} = 1] \leq \maxbadprob, \forall \adrev \in \revset, \adpap \in \papset$.
\end{defn}
$\badprob_{\adrev\adpap} \mathbb{P}[\intas_{\adrev\adpap} = 1]$ is exactly the probability that both (i) reviewer $\adrev$ is assigned to paper $\adpap$ and (ii) this assignment is bad, so the constraints in the problem limit this at $\maxbadprob$ for all $\adrev \in \revset$ and $\adpap \in \papset$. This version of the Pairwise-Constrained Problem may be useful in practice if program chairs find it easier to set the values of $\badprob$ than they would for $\maxprob$. 

We now show how to solve the Bad-Assignment Probability Pairwise-Constrained Problem, by translating it to the original Pairwise-Constrained Problem. Suppose that we have access to the matrix $\fracas$ of marginal assignment probabilities that occur under some randomized assignment. The randomized assignment obeys our constraints if and only if $\fracas_{\adrev\adpap} \badprob_{\adrev\adpap} \leq \maxbadprob, \forall \adrev \in \revset, \adpap \in \papset$. This observation leads to the following method of solving the Bad-Assignment Probability Pairwise-Constrained Problem:
\begin{itemize}
    \item Transform the given instance of the Bad-Assignment Probability Pairwise-Constrained Problem into an instance of the Pairwise-Constrained Problem by constructing a matrix of maximum probabilities $\maxprob$ where 
    \begin{align*}
    \maxprob_{\adrev\adpap} = \min\left\{ \maxbadprob / \badprob_{\adrev\adpap}, 1 \right\} \qquad \forall \adrev \in \revset, \adpap \in \papset.
    \end{align*} 
    \item Solve the Pairwise-Constrained Problem using the algorithm from Theorem~\ref{bvnthm}, described in Section~\ref{secbasicalgo}.
\end{itemize}


\subsection{Handling Bad Pairs of Reviewers}
Here, we first present an alternative version of the Partition-Constrained Problem and show how to solve it. We then present a different approach to handling the issue of bad reviewer pairs. 

\subsubsection{Constraints on Disjoint Reviewer Sets}
In the same way as done above for the Pairwise-Constrained Problem, we define an alternate version of the Partition-Constrained Problem:
\begin{defn}[Bad-Assignment Probability Partition-Constrained Problem]
The input to the problem is a similarity matrix $\simmat$, a matrix $\badprob \in [0, 1]^{\numrev \times \numpap}$ of bad-assignment probabilities, a value $\maxbadprob \in [0, 1]$, and a partition of the reviewer set into subsets $\adinst_1, \dots, \adinst_\numinst \subseteq \revset$. The goal is to find a randomized assignment of papers to reviewers that maximizes $\mathbb{E}\left[\sum_{\adrev \in \revset} \sum_{\adpap \in \papset} \simmat_{\adrev\adpap} \intas_{\adrev\adpap}\right]$ subject to the constraints that $\badprob_{\adrev\adpap} \mathbb{P}[\intas_{\adrev\adpap} = 1] \leq \maxbadprob, \forall \adrev \in \revset, \adpap \in \papset$ and $\mathbb{P}[\intas_{a\adpap} = 1 \land \intas_{b\adpap} = 1] = 0, \forall a, b \in \adinst_i, \forall i \in [\numinst]$.
\end{defn}
Just as for the Bad-Assignment Probability Pairwise-Constrained Problem, we solve this problem by first transforming an instance of this problem into an equivalent instance of the Partition-Constrained Problem, done by constructing a matrix of maximum probabilities $\maxprob$ where $\maxprob_{\adrev\adpap} = \min\left( \maxbadprob / \badprob_{\adrev\adpap}, 1 \right), \forall \adrev \in \revset, \adpap \in \papset$. We then solve this instance using the algorithm in Section~\ref{secpartitionproblem}.

\subsubsection{Constraints on the Expected Number of Bad Reviewers}
The Bad-Assignment Probability Partition-Constrained Problem requires a partition of the reviewer set and prevents pairs of reviewers from being assigned to the same paper if they are in the same subset of this partition. Alternatively, one may want to prevent pairs of reviewers from being assigned to the same paper based on whether $\badprob$ indicates that they are both likely to be bad assignments on this paper, rather than based on some partition of the reviewer set. In this way, we now present an alternative approach to handling the issue of bad reviewer pairs, which does not require a partition of the reviewer set. Rather than explicitly constraining the probabilities of certain same-subset reviewer-reviewer-paper triples as in the Bad-Assignment  Partition-Constrained Problem, we limit the \emph{expected} number of bad reviewers on each paper. 

The following problem states this goal:
\begin{defn}[Bad-Assignment Probability Expectation-Constrained Problem]
The input to the problem is a similarity matrix $\simmat$, a matrix $\badprob \in [0, 1]^{\numrev \times \numpap}$ of bad-assignment probabilities, a value $\maxbadprob \in [0, 1]$, and a value $\mu \in \mathbb{R}$. The goal is to find a randomized assignment of papers to reviewers that maximizes $\mathbb{E}\left[\sum_{\adrev \in \revset} \sum_{\adpap \in \papset} \simmat_{\adrev\adpap} \intas_{\adrev\adpap}\right]$ subject to the constraints that $\badprob_{\adrev\adpap} \mathbb{P}[\intas_{\adrev\adpap} = 1] \leq \maxbadprob, \forall \adrev \in \revset, \adpap \in \papset$ and $\sum_{\adrev \in \revset} W_{\adrev\adpap} \mathbb{E}[ M_{\adrev\adpap}] \leq \mu, \forall \adpap \in \papset$.
\end{defn}
We now present the algorithm that optimally solves this problem. The following LP, $\mathcal{LP}4$, finds a fractional assignment with expected number of bad reviewers on each paper no greater than $\mu$: 
\begin{align}
\argmax_{\fracas \in \mathbb{R}^{\numrev \times \numpap}} \quad & \sum_{\adpap \in \papset} \sum_{\adrev \in \revset} \simmat_{\adrev\adpap} \fracas_{\adrev\adpap} \\
\text{subject to } \quad & 0 \leq \fracas_{\adrev\adpap} \leq 1 & \forall \adrev \in \revset, \forall \adpap \in \papset \label{con:frac1} &\\
&\sum_{\adpap \in \papset} \fracas_{\adrev\adpap} \leq \revload & \forall \adrev \in \revset &\\
& \sum_{\adrev \in \revset} \fracas_{\adrev\adpap} = \papload & \forall \adpap \in \papset \label{con:frac3} &\\
& \fracas_{\adrev\adpap} \badprob_{\adrev\adpap} \leq \maxbadprob & \forall \adrev \in \revset, \forall \adpap \in \papset \label{con:badprob} &\\
& \sum_{\adrev \in \revset} \fracas_{\adrev\adpap} \badprob_{\adrev\adpap} \leq \mu & \forall \adpap \in \papset \label{con:badpair} &.
\end{align}
Constraints~(\ref{con:frac1}-\ref{con:frac3}) define the space of fractional assignment matrices, Constraint~(\ref{con:badprob}) ensures that the probability of each bad assignment occurring is limited at $\maxbadprob$, and Constraint~(\ref{con:badpair}) ensures that the expected number of bad reviewer-paper assignments for each paper is at most $\mu$. Therefore, $\mathcal{LP}4$ finds the optimal fractional assignment for the Bad-Assignment Probability Expectation-Constrained Problem. This fractional assignment can then be sampled from using the sampling algorithm in Section~\ref{secbasicsamplingalgo}. 

The above approach to controlling bad reviewer pairs is not directly comparable to the approach taken earlier when solving the Bad-Assignment Probability Partition-Constrained Problem. The Bad-Assignment Probability Expectation-Constrained Problem indirectly restricts pairs of reviewers from being assigned to the same paper based on whether $\badprob$ indicates that they are both likely to be bad assignments on that paper, instead of based on a partition of the reviewer set. This could be advantageous if the sets of likely-bad reviewers for each paper (as given by the probabilities in $\badprob$) are not expressed well by any partition of the reviewer set. However, handling suspicious reviewer pairs through constraining the expected number of bad reviewers per paper is weaker than directly constraining the probabilities of certain reviewer-reviewer-paper triples (as in the Bad-Assignment Probability Partition-Constrained Problem). First, it provides a guarantee only in expectation, and does not guarantee anything about the probabilities of the events we wish to avoid (that is, bad reviewer pairs being assigned to a paper). In addition, we here are assuming that the event of paper $\adpap$ and reviewer $\adrev$ being a bad assignment is independent of this event for all other reviewer-paper pairs; so, this method cannot address the issue of associations between reviewers, such as their presence at the same academic institution.

\section{Decomposition Algorithm for the Pairwise-Constrained Problem} \label{bvnthmproof}

In Section~\ref{secbasicalgo}, we provided the sampling algorithm that realizes Theorem~\ref{bvnthm}, thus solving the Pairwise-Constrained Problem (Definition~\ref{defnpairwise}). We here provide a decomposition algorithm to compute a full distribution over deterministic assignments for a given fractional assignment matrix (which the prior work \cite{Budish2009IMPLEMENTINGRA} does not).  
For simplicity, we assume here that all reviewer loads are met with equality (that is, $\sum_{\adpap \in \papset} \fracas_{\adrev\adpap} = \revload$ for all $\adrev \in \revset$); the extension to the case when reviewer loads are met with inequality is simple.

We first define certain concepts necessary for the algorithm. We then present a subroutine of the algorithm and prove its correctness. We then present the overall algorithm and prove its correctness. Finally, we analyze the time complexity of the algorithm.

\paragraph{Preliminaries:}
We define here three concepts used in the algorithm and its proof.

\begin{itemize}
\item A capacitated matching instance consists of a set of papers $\papset$, a set of reviewers $\revset$, and a capacity function $\capac: \papset \cup \revset \to \mathbb{Z}$. 
A solution to $(\papset, \revset, \capac)$ is a matrix $\fracas \in [0, 1]^{\numrev \times \numpap}$, where for any $\adpap \in \papset$,
\[
    \sum_{\adrev \in \revset} \fracas_{\adrev\adpap} = \capac(\adpap),
\]
and for any $\adrev \in \revset$,
\[
    \sum_{\adpap \in \papset} \fracas_{\adrev\adpap} = \capac(\adrev).
\]
The solution $\fracas$ is integral if $\fracas_{\adrev\adpap} \in \{0, 1\}$ for all $\adpap \in \papset$ and $\adrev \in \revset$.

\item For any $\revset$ and $\papset$, a maximum matching on a set $S \subseteq \revset \times \papset$ subject to capacities $\capac$ is a set $M \subseteq S$ such that $\sum_{\adrev \in \revset} \mathbb{I}[(\adrev, \adpap) \in M] \leq \capac(\adpap), \forall \adpap \in \papset$ and $\sum_{\adpap \in \papset}  \mathbb{I}[(\adrev, \adpap) \in M] \leq \capac(\adrev), \forall \adrev \in \revset$, and $|M|$ is maximized. 

\item For any $\revset$ and $\papset$, a perfect matching on a set $S \subseteq \revset \times \papset$ subject to capacities $\capac$ is a maximum matching on $S$ subject to $\capac$ that additionally satisfies $\sum_{\adrev \in \revset} \mathbb{I}[(\adrev, \adpap) \in M] = \capac(\adpap), \forall \adpap \in \papset$ and $\sum_{\adpap \in \papset}  \mathbb{I}[(\adrev, \adpap) \in M] = \capac(\adrev), \forall \adrev \in \revset$.
\end{itemize}

\paragraph{Decomposition subroutine:}
The following procedure, a subroutine of the overall algorithm, takes an instance $(\papset, \revset, \capac)$ and a solution to that instance $\fracas$ as input, and outputs an integral solution $\fracas_0$ to $(\papset, \revset, \capac)$ with weight $\alpha_0$  and a fractional solution $\fracas'$ to $(\papset, \revset, \capac)$ with strictly fewer fractional entries than $\fracas$. Moreover, $\fracas$, $\fracas_0$, $\alpha_0$, and $\fracas'$ satisfy $\fracas = \alpha_0 \fracas_0 + (1 - \alpha_0) \fracas'$.
\begin{enumerate}
    \item Let $E \subseteq \revset \times \papset$ be $E = \{ (\adrev, \adpap) \mid \fracas_{\adrev\adpap} \in (0, 1) \}$, and let $M_0 \subseteq \revset \times \papset$ be $M_0 = \{ (\adrev, \adpap) \mid \fracas_{\adrev\adpap} = 1 \}$.
    With this, define capacity function $\capac'$ as, for any $\adpap \in \papset$,
    \[
        \capac'(\adpap) = \capac(\adpap) - |\{(\adrev, \adpap) \mid \adrev \in \revset\} \cap M_0|
    \]
    and for any $\adrev \in \revset$,
    \[
        \capac'(\adrev) = \capac(\adrev) - |\{(\adrev, \adpap) \mid \adpap \in \papset\} \cap M_0|.
    \]
    \item Find a maximum matching $M \subseteq E$ on $E$ subject to capacity constraints $\capac'$.
    \item Set $\fracas_0$ as 
    \[
    (\fracas_0)_{\adrev\adpap} = \mathbb{I} \left[ (\adrev, \adpap) \in M \cup M_0 \right], \forall \adrev \in \revset, \adpap \in \papset.
    \]
    Set $\fracas'$ as
    \[
        \fracas'_{\adrev\adpap} = \frac{1}{(1 - \alpha_0)} (\fracas_{\adrev\adpap} - \alpha_0 ({\fracas_0})_{\adrev\adpap}),  \forall \adrev \in \revset, \adpap \in \papset.
    \]
        Set $\alpha_0 = \min( \{\fracas_{\adrev\adpap} \mid (\adrev, \adpap) \in M\} \cup \{1 - \fracas_{\adrev\adpap} \mid (\adrev, \adpap) \in E \setminus (M \cup M_0)\} )$.
\end{enumerate}

We prove the correctness of this subroutine in Lemma~\ref{decomplemma}. Before we do, we restate a result from prior work \cite{Budish2009IMPLEMENTINGRA} that we use in the proof, using our own notation. 

\begin{lemma}[{\cite[Thm. 1]{Budish2009IMPLEMENTINGRA}}] \label{bvnexistlemma}
    For any $(\papset, \revset, \capac)$ and any solution $\fracas$ to $(\papset, \revset, \capac)$, there exists some $z \in \mathbb{Z}$, integral solutions $\{\fracas_1, \dots, \fracas_z\}$ to $(\papset, \revset, \capac)$, and $\alpha$ lying on the $z$-dimensional simplex, such that $\fracas = \sum_{i = 1}^z \alpha_i \fracas_i$.
\end{lemma}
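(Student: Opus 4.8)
The statement is a capacitated (generalized) Birkhoff--von Neumann theorem, and the plan is to realize the set of solutions as an integral polytope and then invoke convexity. I would let $P \subseteq \mathbb{R}^{\numrev \times \numpap}$ denote the set of all solutions to $(\papset, \revset, \capac)$, i.e.\ the polytope cut out by the box constraints $0 \le \fracas_{\adrev\adpap} \le 1$ together with the capacity equalities $\sum_{\adrev \in \revset} \fracas_{\adrev\adpap} = \capac(\adpap)$ for every $\adpap \in \papset$ and $\sum_{\adpap \in \papset} \fracas_{\adrev\adpap} = \capac(\adrev)$ for every $\adrev \in \revset$. The given matrix $\fracas$ is by hypothesis a point of $P$, and the \emph{integral} solutions are exactly the $\{0,1\}$-valued points of $P$. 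Hence it suffices to prove that every vertex (extreme point) of $P$ is integral; once this is established, boundedness of $P \subseteq [0,1]^{\numrev \times \numpap}$ and the Minkowski--Weyl theorem express $\fracas$ as a convex combination of vertices, and Carath\'eodory's theorem bounds the number of vertices needed by $z \le \numrev\numpap + 1$, yielding the claimed finite decomposition $\fracas = \sum_{i=1}^z \alpha_i \fracas_i$ with $\alpha$ on the simplex.

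The key step, which I expect to carry the weight of the argument, is the integrality of the vertices of $P$. The equality constraints are precisely the incidence relations of the complete bipartite graph on $\revset \cup \papset$: each variable $\fracas_{\adrev\adpap}$ is an edge, and each constraint sums the edges at one node to that node's capacity. The corresponding constraint matrix is therefore totally unimodular, and adjoining the box constraints (rows with a single $\pm 1$ entry) preserves total unimodularity. Since every right-hand side --- the capacities $\capac(\adpap)$ and $\capac(\adrev)$ and the bounds $0$ and $1$ --- is an integer, the Hoffman--Kruskal theorem guarantees that every vertex of $P$ has integer coordinates, and the box constraints then force those coordinates into $\{0,1\}$. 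Thus each vertex is an integral solution, completing the polytope argument.

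As a self-contained alternative that avoids citing the integrality theorem as a black box (and that foreshadows the constructive decomposition subroutine), I would instead induct on the number of fractional entries of $\fracas$. The base case (no fractional entries) is immediate. For the inductive step, the crucial combinatorial fact is that if $\fracas$ has any fractional entry, then in the subgraph of fractional edges every non-isolated node has degree at least two: at each reviewer and at each paper the integral entries and the capacity are integers, so the fractional entries at that node sum to an integer and cannot consist of a lone value in $(0,1)$. A bipartite graph with minimum degree at least two on its support contains an even cycle; alternately adding $+\varepsilon$ and $-\varepsilon$ along this cycle leaves every row and column sum unchanged and preserves feasibility. Pushing $\varepsilon$ maximally in each of the two directions produces two feasible solutions, each with strictly fewer fractional entries, whose appropriate convex combination recovers $\fracas$; recursing on each and flattening the resulting binary tree yields the decomposition. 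The main obstacle on this route is exactly the degree-at-least-two claim (hence the existence of a cycle) together with the bookkeeping that the fractional count strictly decreases, so that the recursion terminates in finitely many steps.
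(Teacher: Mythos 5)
Your proposal is correct, but note that the paper does not prove this lemma at all: it is imported verbatim as Theorem~1 of \cite{Budish2009IMPLEMENTINGRA}, so any proof you give is additional relative to the paper. Your first route is the standard integral-polytope argument and is sound: the equality constraints are the incidence system of a complete bipartite graph, hence totally unimodular; appending the unit rows for the box constraints preserves total unimodularity; since $\capac$ is integer-valued by definition, Hoffman--Kruskal makes every vertex of the solution polytope integral, the bounds force vertices into $\{0,1\}^{\numrev \times \numpap}$, and Minkowski--Weyl plus Carath\'eodory give the decomposition with the explicit bound $z \le \numrev\numpap + 1$, which is strictly more information than the lemma asserts. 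Your self-contained alternative is also correct, and it is worth observing that it is essentially the same mechanism the paper deploys elsewhere in a different guise: the degree-at-least-two argument for the existence of a fractional cycle is the flow-conservation argument of Section~4.2, and the two maximal pushes $\alpha$, $\beta$ along the cycle with convex weight $\beta/(\alpha+\beta)$ are exactly Lines~\ref{lineflowamt1}--\ref{linesetflow} of Algorithm~\ref{algsampling}, derandomized into a decomposition (recurse on both branches instead of sampling one). One genuine payoff of your second route: the paper's Appendix~\ref{bvnthmproof} decomposition subroutine invokes this lemma as a black box precisely to certify that the maximum matching on the fractional support is perfect, so your induction would make that entire development self-contained, at the modest cost that flattening the recursion tree only yields finiteness of $z$ (the tree can have exponentially many leaves) unless you additionally prune via Carath\'eodory or note, as the paper's subroutine does, that peeling off one integral solution at a time caps $z$ at $O(\numrev\numpap)$.
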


Now, the following lemma proves the correctness of the subroutine.

\begin{lemma} \label{decomplemma}
    The decomposition subroutine finds $\fracas_0$, $\alpha_0$, and $\fracas'$, such that (i) $\fracas_0$ is an integral solution to $(\papset, \revset, \capac)$, (ii) $\fracas'$ is a fractional solution to $(\papset, \revset, \capac)$, (iii) $\fracas'$ has strictly fewer fractional entries than $\fracas$, and (iv) $\fracas = \alpha_0 \fracas_0 + (1 - \alpha_0) \fracas'$.
\end{lemma}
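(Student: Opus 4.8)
The plan is to establish the four properties in a convenient order: property (iv) is essentially definitional, property (i) is the crux and relies on the existence result of Lemma~\ref{bvnexistlemma}, and properties (ii) and (iii) then follow from a case analysis driven by the definition of $\alpha_0$. Throughout I treat the ``fractional part'' of $\fracas$, namely its restriction to the edge set $E$, as a fractional solution to the residual instance $(\papset, \revset, \capac')$: for each paper $\adpap$, the entries equal to $1$ contribute exactly $|\{(\adrev,\adpap) \in M_0\}|$ to the row sum and the entries equal to $0$ contribute nothing, so $\sum_{\adrev : (\adrev,\adpap) \in E} \fracas_{\adrev\adpap} = \capac(\adpap) - |\{(\adrev,\adpap) \in M_0\}| = \capac'(\adpap)$, and symmetrically for reviewers. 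Hence $\fracas$ restricted to $E$ is a valid fractional solution to $(\papset, \revset, \capac')$.

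The main obstacle is proving (i), i.e.\ that $M \cup M_0$ meets every capacity with equality so that $\fracas_0$ is a genuine integral solution to $(\papset, \revset, \capac)$. The key step is to argue that the maximum matching $M$ found in Step 2 is in fact a \emph{perfect} matching on $E$ subject to $\capac'$. I would apply Lemma~\ref{bvnexistlemma} to the residual solution above: it decomposes $\fracas$ restricted to $E$ into a convex combination of integral solutions to $(\papset, \revset, \capac')$. Because this restriction vanishes off $E$ and all weights in the decomposition are positive, each such integral solution must also vanish off $E$; thus each is a perfect matching contained in $E$, proving that a perfect matching on $E$ subject to $\capac'$ exists. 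Since $M$ is a maximum matching, it must therefore be perfect as well. Adding back $M_0$ then restores the original capacities: for each paper $\adpap$, $\sum_\adrev (\fracas_0)_{\adrev\adpap} = |\{(\adrev,\adpap)\in M\}| + |\{(\adrev,\adpap)\in M_0\}| = \capac'(\adpap) + |\{(\adrev,\adpap)\in M_0\}| = \capac(\adpap)$, and likewise for reviewers, giving (i).

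For (iv), rearranging the definition $\fracas'_{\adrev\adpap} = \frac{1}{1-\alpha_0}(\fracas_{\adrev\adpap} - \alpha_0 (\fracas_0)_{\adrev\adpap})$ yields $\fracas = \alpha_0 \fracas_0 + (1-\alpha_0)\fracas'$ entrywise, provided $\alpha_0 \neq 1$. I would first check $\alpha_0 \in (0,1)$: every term in the minimum defining $\alpha_0$ is of the form $\fracas_{\adrev\adpap}$ or $1 - \fracas_{\adrev\adpap}$ for a fractional $(\adrev,\adpap) \in E$, hence lies strictly in $(0,1)$, and the index set of this minimum is exactly $M \cup (E \setminus M) = E$, which is nonempty whenever the subroutine is invoked. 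For (ii), the row and column sums of $\fracas'$ are immediate from (iv) and the fact that $\fracas$ and $\fracas_0$ are both solutions to $(\papset, \revset, \capac)$: for instance $\sum_\adrev \fracas'_{\adrev\adpap} = \frac{1}{1-\alpha_0}(\capac(\adpap) - \alpha_0 \capac(\adpap)) = \capac(\adpap)$.

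The remaining range constraint $\fracas'_{\adrev\adpap} \in [0,1]$ and property (iii) I would dispatch together by the four cases $(\adrev,\adpap) \in M_0$, $(\adrev,\adpap) \in M$, $(\adrev,\adpap) \in E \setminus (M \cup M_0)$, and $\fracas_{\adrev\adpap} = 0$. Entries in $M_0$ map to $1$ and entries equal to $0$ stay at $0$, so they remain integral; entries in $M$ map to $\frac{\fracas_{\adrev\adpap} - \alpha_0}{1-\alpha_0}$, which lies in $[0,1)$ precisely because $\alpha_0 \le \fracas_{\adrev\adpap}$; and entries in $E \setminus (M \cup M_0)$ map to $\frac{\fracas_{\adrev\adpap}}{1-\alpha_0}$, which lies in $(0,1]$ precisely because $\alpha_0 \le 1 - \fracas_{\adrev\adpap}$. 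These bounds hold exactly because $\alpha_0$ is the minimum of precisely these quantities, establishing (ii). Finally, every newly fractional entry of $\fracas'$ lies in $E$, and the entry of $E$ achieving the minimum in $\alpha_0$ is mapped to exactly $0$ or $1$, so $\fracas'$ has strictly fewer fractional entries than $\fracas$, establishing (iii).
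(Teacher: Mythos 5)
Your proposal is correct and follows essentially the same route as the paper's proof: your ``restriction of $\fracas$ to $E$'' is exactly the paper's auxiliary solution $\fracas''$, you invoke Lemma~\ref{bvnexistlemma} on the residual instance $(\papset, \revset, \capac')$ in the same way to show the maximum matching $M$ is perfect, and your case analysis for the range constraints and for (iii) matches the paper's (with the harmless refinement of splitting off the zero entries as a fourth case). Your explicit verification that $\alpha_0 \in (0,1)$, which the paper leaves implicit, is a welcome touch of extra care but not a different argument.
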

\begin{proof}
    We first consider (i).
    The key step is to show that the maximum matching $M$ found in step 2 is a perfect matching with respect to $\capac'$, or equivalently, to show there is a perfect matching on $E$ with respect to $\capac'$. 
    Consider the capacitated matching instance $(\papset, \revset, \capac')$, and the solution $\fracas''$ where 
    \[
    \fracas''_{\adrev\adpap} = \begin{cases} 
    \fracas_{\adrev\adpap} & \text{if } \fracas_{\adrev\adpap} < 1 \\ 
    0 & \text{otherwise}.
    \end{cases}
    \]
    $\fracas''$ is a solution to $(\papset, \revset, \capac')$ by the construction of $\capac'$. By Lemma~\ref{bvnexistlemma}, $\fracas''$ is a convex combination of integral solutions to $(\papset, \revset, \capac')$.
    For some $z$, let $\{\fracas_1, \dots, \fracas_z\}$ and $\alpha$ be such a decomposition of $\fracas''$, where each $\fracas_i$ is an integral solution to $(\revset, \papset, \capac')$ and $\alpha_i$ is its associated weight.
    For each $i \in [z]$, let $M_i \subseteq \revset \times \papset$ be the set of $(\adrev, \adpap)$ pairs where $(\fracas_i)_{\adrev\adpap} = 1$.
    Since $\fracas_i$ is a solution to $(\revset, \papset, \capac')$, $M_i$ is a perfect matching with respect to $\capac'$. 
    By the definition of $\fracas''$, $(\adrev, \adpap) \in E$ if and only if $\fracas''_{\adrev\adpap} > 0$.
    Now since $\fracas'' = \sum_{i=1}^z \alpha_i \fracas_i$, $E = \bigcup_{i=1}^z M_i$.
    Since each $M_i$ is a perfect matching with respect to $\capac'$, $E$ contains a perfect matching with respect to $\capac'$ and so the maximum matching $M$ found is in fact a perfect matching with respect to $\capac'$.
    Therefore, $M \cup M_0$ is a perfect matching with respect to $\capac$ by the definition of $\capac'$. Therefore, $\fracas_0$ is an integral solution to $(\papset, \revset, \capac)$.
    
    For (ii), by the construction of $\fracas'$, all capacity constraints hold with equality. 
    We only need to show that $\fracas'_{\adrev\adpap} \in [0, 1]$ for any $(\adrev, \adpap)$.
    Consider any $(\adrev, \adpap)$.
    There are $3$ cases.
    If $(\adrev, \adpap) \in M_0$, then $\fracas'_{\adrev\adpap} = 1$.
    If $(\adrev, \adpap) \not\in M \cup M_0$, then the choice of $\alpha_0$ ensures that
    \[
        \fracas'_{\adrev\adpap} = \frac{1}{(1 - \alpha_0)} \fracas_{\adrev\adpap} \le \frac{1}{(1 - (1 - \fracas_{\adrev\adpap}))} \fracas_{\adrev\adpap} = 1
    \]
    and
    \[
    \fracas'_{\adrev\adpap} = \frac{1}{(1 - \alpha_0)} \fracas_{\adrev\adpap} \geq \fracas_{\adrev\adpap} \geq 0.
    \]
    If $(\adrev, \adpap) \in M$, the choice of $\alpha_0$ ensures that
    \[
        \fracas'_{\adrev\adpap} = \frac{1}{(1 - \alpha_0)} (\fracas_{\adrev\adpap} - \alpha_0) \ge \frac{1}{(1 - \alpha_0)} (\fracas_{\adrev\adpap} - \fracas_{\adrev\adpap}) = 0
    \]
    and
    \[
        \fracas'_{\adrev\adpap} = \frac{1}{(1 - \alpha_0)} (\fracas_{\adrev\adpap} - \alpha_0) \leq \fracas_{\adrev\adpap} \leq 1.
    \]
    As a result, $\fracas'$ is a solution to $(\papset, \revset, \capac)$.

    For (iii), the choice of $\alpha_0$ ensures that at least one of the inequalities above achieves equality.
    That is, there exists $(\adrev, \adpap)$ where $\fracas_{\adrev\adpap} \in (0, 1)$ such that $\fracas'_{\adrev\adpap} \in \{0, 1\}$.
    
    Finally, (iv) holds by the construction of $\fracas_0$ and $\fracas'$.
\end{proof}

\paragraph{Overall algorithm:}
Using the above subroutine, the overall algorithm proceeds in the following recursive way. It     takes as input a capacitated matching instance $(\papset, \revset, \capac)$ and a solution to that instance $\fracas$. It outputs integral solutions $\{\fracas_1, \dots, \fracas_z\}$ to $(\papset, \revset, \capac)$ and $\alpha$ lying on the $z$-dimensional simplex, such that $\fracas = \sum_{i = 1}^z \alpha_i \fracas_i$.
\begin{enumerate}
    \item 
    If $\fracas$ is integral, return solution $\{\fracas\}$ and weight $1$.
    \item Otherwise, decompose $\fracas$ into $\fracas_0$ (with weight $\alpha_0$) and $\fracas'$ using the above subroutine.
    \item Recursively call this algorithm with $(\papset, \revset, \capac)$ and $\fracas'$ as input, decomposing $\fracas'$ into solutions $\{\fracas_1, \dots, \fracas_z\}$ with weights $\alpha$.
    \item Define $\beta = (1-\alpha_0) \alpha$. Return the solutions $\{\fracas_0, \fracas_1, \dots, \fracas_z\}$ with weights $(\alpha_0, \beta_1, \dots, \beta_z)$.
\end{enumerate}

We now prove the correctness of this algorithm.
\begin{thm}
The decomposition algorithm correctly outputs integral solutions $\{\fracas_1, \dots, \fracas_z\}$ to $(\papset, \revset, \capac)$ and $\alpha$ lying on the $z$-dimensional simplex, such that $\fracas = \sum_{i = 1}^z \alpha_i \fracas_i$.
\end{thm}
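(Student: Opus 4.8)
The plan is to prove this by induction on the number of fractional entries in the solution $\fracas$, since Lemma~\ref{decomplemma} guarantees that each application of the decomposition subroutine strictly reduces this quantity. The base case is when $\fracas$ is integral (zero fractional entries): then Step~1 returns the singleton $\{\fracas\}$ with weight $1$, which is trivially an integral solution, the weight vector $(1)$ lies on the $1$-dimensional simplex, and $\fracas = 1 \cdot \fracas$.

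For the inductive step, suppose $\fracas$ is not integral and assume the algorithm is correct on every solution with strictly fewer fractional entries. Applying Lemma~\ref{decomplemma} to the subroutine in Step~2 yields an integral solution $\fracas_0$, a weight $\alpha_0$, and a fractional solution $\fracas'$ with strictly fewer fractional entries than $\fracas$, satisfying $\fracas = \alpha_0 \fracas_0 + (1-\alpha_0)\fracas'$. I would first confirm that $\alpha_0 \in (0,1)$, so that $\fracas'$ (and hence the recursion) is well-defined: since $\fracas$ is non-integral the edge set $E$ is nonempty, hence the perfect matching $M$ is nonempty, and every quantity in the minimum defining $\alpha_0$ (namely $\fracas_{\adrev\adpap}$ for $(\adrev,\adpap)\in M$ and $1-\fracas_{\adrev\adpap}$ for $(\adrev,\adpap)\in E\setminus(M\cup M_0)$) is strictly between $0$ and $1$. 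Because $\fracas'$ has fewer fractional entries, the inductive hypothesis applies to the recursive call in Step~3, returning integral solutions $\{\fracas_1,\dots,\fracas_z\}$ and weights $\alpha$ on the $z$-dimensional simplex with $\fracas' = \sum_{i=1}^z \alpha_i \fracas_i$.

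It then remains to verify the three output properties for $\{\fracas_0,\fracas_1,\dots,\fracas_z\}$ with weights $(\alpha_0,\beta_1,\dots,\beta_z)$, where $\beta = (1-\alpha_0)\alpha$ componentwise. Integrality is immediate: $\fracas_0$ is integral by Lemma~\ref{decomplemma} and each $\fracas_i$ is integral by the inductive hypothesis. The weights lie on the $(z+1)$-dimensional simplex since $\alpha_0 \geq 0$ and $\beta_i = (1-\alpha_0)\alpha_i \geq 0$, while $\alpha_0 + \sum_{i=1}^z \beta_i = \alpha_0 + (1-\alpha_0)\sum_{i=1}^z \alpha_i = \alpha_0 + (1-\alpha_0) = 1$. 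Finally, the convex combination reconstructs $\fracas$: $\alpha_0 \fracas_0 + \sum_{i=1}^z \beta_i \fracas_i = \alpha_0 \fracas_0 + (1-\alpha_0)\sum_{i=1}^z \alpha_i \fracas_i = \alpha_0 \fracas_0 + (1-\alpha_0)\fracas' = \fracas$. Termination follows from the same monotone quantity: the fractional-entry count strictly decreases with each recursive call and is bounded below by $0$, so the recursion bottoms out in finitely many steps.

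I expect the only genuine subtlety to be the bookkeeping around the weight renormalization $\beta = (1-\alpha_0)\alpha$ together with confirming $\alpha_0 < 1$ strictly so the recursion is well-posed; the substantive combinatorial content—that $\fracas_0$ is a valid integral solution to $(\papset,\revset,\capac)$ and that $\fracas'$ has strictly fewer fractional entries—is already packaged into Lemma~\ref{decomplemma}, so the remainder is a routine inductive verification.
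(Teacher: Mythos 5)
Your proof is correct and follows essentially the same route as the paper's: induction with the integral case as the base, Lemma~\ref{decomplemma} supplying the inductive step, and a routine verification that the combined weights $(\alpha_0, (1-\alpha_0)\alpha)$ lie on the simplex and reconstruct $\fracas$. Your additional check that $\alpha_0 \in (0,1)$ (so the rescaling of $\fracas'$ and the recursion are well-posed) is a small point of rigor the paper leaves implicit, and it is argued correctly.
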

\begin{proof}
We prove this statement by induction. If the algorithm returns in step 1, the theorem's statement holds. Now, assume that the theorem's statement holds for the decomposition returned by the recursive call to the algorithm in step 3, so that the following all hold: $\{\fracas_1, \dots, \fracas_z\}$ are integral solutions to $(\papset, \revset, \capac)$, $\alpha$ lies on the $z$-dimensional simplex, and $\fracas' = \sum_{i = 1}^z \alpha_i \fracas_i$. By Lemma~\ref{decomplemma}, $\fracas_0$ is an integral solution to $(\papset, \revset, \capac)$, so the $z+1$ solutions returned in step 4 are integral solutions to $(\papset, \revset, \capac)$. Since $\alpha_0 \in [0, 1]$, $\beta \in [0, 1]^z$, and $\alpha_0 + \sum_{i = 1}^z \beta_z = 1$, the weights returned in step 4 lie on the $z+1$ dimensional simplex. Finally, by Lemma~\ref{decomplemma},
\begin{align*}
\fracas &= \alpha_0 \fracas_0 + (1 - \alpha_0) \fracas' \\
&=\alpha_0 \fracas_0 + (1 - \alpha_0)  \sum_{i = 1}^z \alpha_i \fracas_i \\
&= \alpha_0 \fracas_0 + \sum_{i = 1}^z \beta_i \fracas_i.
\end{align*}
Therefore, the theorem's statement holds for the output of the algorithm in step 4. By induction, this proves the desired statement.
\end{proof}

This decomposition algorithm can be used as part of the algorithm that solves the Pairwise-Constrained Problem, substituting for the sampling algorithm described in Section~\ref{secbasicsamplingalgo}. It finds the full decomposition of the fractional assignment matrix into deterministic assignments rather than sampling a deterministic assignment. The capacity function $\capac$ used as the original input to the algorithm is defined as $\capac(\adrev) = \revload, \forall \adrev \in \revset$ and $\capac(\adpap) = \papload, \forall \adpap \in \papset$, and the input solution $\fracas$ is exactly the fractional assignment matrix found as the solution to $\mathcal{LP}1$. The output integral solutions represent deterministic assignments, and the corresponding weights represent the probability with which each assignment should be chosen.

\paragraph{Time complexity:}
Since $\fracas'$ has at least one fewer fractional entry than $\fracas$, the recursive procedure has depth $O(\numpap \numrev)$ and therefore makes $O(\numpap \numrev)$ calls to the decomposition subroutine. In each call, the bottleneck is finding a maximum matching on $E$ subject to capacities $\capac$. This can be solved as a max-flow problem on a graph with $O(\numpap + \numrev)$ vertices and $O(\numpap \numrev)$ edges \cite{cormen2009introduction}. Using Dinic's algorithm \cite{dinic1970algorithm}, the computation of each matching takes $O(\numpap \numrev (\numpap + \numrev)^2 )$ time, giving an overall time complexity of $O( \numpap^2 \numrev^2 (\numpap + \numrev)^2)$.

 \section{Proofs of Theorem~\ref{nph} and Corollary~\ref{nphcor}} \label{nphproof}
\paragraph{Proof of Theorem~\ref{nph}:}
We first define a decision variant of the Triplet-Constrained Problem, called ``Arbitrary-Constraint Feasibility.'' An instance of this problem is defined by the paper and reviewer loads $\papload$ and $\revload$, and a $3$-dimensional tensor $\maxprobtens \in [0, 1]^{\numrev \times \numrev \times \numpap}$. For all $i, j \in \revset, i \neq j$ and for all $\adpap \in \papset$, $\maxprobtens_{ij\adpap}$ denotes the maximum probability that both reviewers $i$ and $j$ are assigned to paper $\adpap$. The question is: does there exist a randomized assignment that obeys the constraints given by $\maxprobtens$? We next show that Arbitrary-Constraint Feasibility is NP-hard by a reduction from 3-Dimensional Matching.

An instance of 3-Dimensional Matching consists of three sets $X, Y, Z$ of size $s$, and a collection of tuples in $X \times Y \times Z$. It asks whether there exists a selection of $s$ tuples that includes each element of $X, Y,$ and $Z$ at most once. This problem is known to be NP-complete \cite{karp1972reducibility}.

Given such an instance of 3-Dimensional Matching, we construct an instance of Arbitrary-Constraint Feasibility. Set loads of $\papload = 2$ reviewers per paper and $\revload = 1$ paper per reviewer. Consider $|X|+|Y|$ reviewers (one for each element of $X \cup Y$) and $|Z|$ papers (one for each element of $Z$). Define the tensor $\maxprobtens$ to have $\maxprobtens_{ij\adpap}$ equal to $1$ if $(i, j, \adpap)$ is one of the tuples, and $0$ otherwise. 

We now show that a 3-Dimensional Matching instance is a yes instance (that is, the answer to it is ``yes'') if and only if the corresponding Arbitrary-Constraint Feasibility instance is a yes instance, thus proving that solving Arbitrary-Constraint Feasibility in polynomial time would allow us to solve 3-Dimensional Matching in polynomial time. If there exists a feasible reviewer-paper assignment in the corresponding Arbitrary-Constraint Feasibility instance, then we would answer yes for the original 3-Dimensional Matching instance; otherwise, if there does not exist a feasible reviewer-paper assignment, then we would answer no for the original 3-Dimensional Matching instance.

If the 3-Dimensional Matching instance is a yes (that is, there exists a valid selection of $s$ tuples), then consider the paper assignment that assigns the corresponding reviewers and paper within each triple in the matching. Each paper has exactly $2$ reviewers and each reviewer has exactly $1$ paper, so this is a deterministic assignment. Since it includes only the triples in the matching instance, it obeys the probability constraints of $\maxprobtens$, so the Arbitrary-Constraint Feasibility instance is a yes.

If the 3-Dimensional Matching instance is a no, then all choices of $s$ tuples include some element of $X, Y$, or $Z$ twice. If some element of $Z$ is chosen twice, then there must exist another element of $Z$ that is not included in any tuple. Therefore, any assignment of reviewer pairs to papers must either (a) include some reviewer-pair-to-paper assignment disallowed by $\maxprobtens$ (i.e., an assignment not in the collection of tuples), (b) make less than $s$ assignments of pairs to papers (and thus not assign to some paper), or (c) assign a reviewer twice or not assign some paper. So, no deterministic reviewer-paper assignment can meet the constraints of $\maxprobtens$. Now consider any randomized assignment, and select an arbitrary deterministic assignment in support of the randomized assignment. This deterministic assignment does not meet the constraints of $\maxprobtens$, so it must assign some reviewer $\adrev$ to some paper $\adpap$ that $\maxprobtens$ requires to have probability $0$. Therefore, since this deterministic assignment is in support, the randomized assignment assigns reviewer $\adrev$ to paper $\adpap$ with non-zero probability, thereby violating the constraints of $\maxprobtens$. Therefore, no randomized assignment can meet the constraints of $\maxprobtens$. Therefore, the Arbitrary-Constraint Feasibility instance is a no. This proves that Arbitrary-Constraint Feasibility is NP-hard.

Since even telling if the feasible region of randomized assignments is non-empty is NP-hard, optimizing any objective over this region is also NP-hard. Therefore, the Triplet-Constrained Problem is NP-hard.

\paragraph{Proof of Corollary~\ref{nphcor}:} 
Suppose that the polytope of implementable reviewer-reviewer-paper probabilities could be expressed in a polynomial number of linear inequality constraints (with the reviewer-reviewer-paper probabilities as variables). An LP could then be constructed with these inequalities as well as the inequalities given by a tensor $\maxprobtens$ of maximum reviewer-reviewer-paper probabilities. Solving this LP with any linear objective would then find a feasible point, solving Arbitrary-Constraint Feasibility. Since LPs can be solved in time polynomial in the number of variables and constraints, this is a contradiction unless $P \neq NP$.

\section{Proof of Lemma~\ref{samplinglemma} and Corollary~\ref{samplingcor}} \label{samplingthmproof}
In Section~\ref{secinstsamplingalgo}, we described the sampling algorithm that realizes Lemma~\ref{samplinglemma} and Corollary~\ref{samplingcor}. Here, we present proofs of these results.

\paragraph{Proof of Lemma~\ref{samplinglemma}:}
We first prove part (i) of the lemma. Consider any subset $\adinst$ and any paper $\adpap$, and recall that in Section~\ref{secinstsamplingalgo} we showed that the algorithm presented there has the property that the total load on each paper from each subset is preserved exactly if originally integral and rounded in either direction if originally fractional. If the total load from subset $\adinst$ on paper $\adpap$ is less than or equal to $1$ originally (i.e., $\sum_{\adrev \in \adinst} \fracas_{\adrev\adpap} \leq 1$), then this algorithm will only ever sample assignments with either $0$ or $1$ reviewers, so it never samples a integral assignment that assigns two reviewers from subset $\adinst$ to paper $\adpap$.

We now prove part (ii) of the lemma. Suppose that the total load from subset $\adinst$ on paper $\adpap$ is originally strictly greater than $1$ (i.e., $\sum_{\adrev \in \adinst} \fracas_{\adrev\adpap} > 1$). Let $X$ denote  a random variable that represents the number of reviewers from subset $\adinst$ on paper $\adpap$, that is, $X = \sum_{\adrev \in \adinst} \intas_{\adrev\adpap}$. Hence, we have $\mathbb{E}[X] = \sum_{\adrev \in \adinst} \fracas_{\adrev\adpap} > 1$. Suppose that we implement the marginal probabilities $\fracas$ as a distribution over deterministic assignments that places zero mass on any deterministic assignment where $X \geq 2$. Since $X$ is integral in any deterministic assignment, all of the mass must be placed on deterministic assignments where $X \leq 1$. Since $\mathbb{E}[X] > 1$, this is impossible. Therefore, $\fracas$ cannot be implemented without having some probability of placing two reviewers from subset $\adinst$ on paper $\adpap$, so the expected number of pairs of reviewers from subset $\adinst$ assigned to paper $\adpap$ must be non-zero for any sampling algorithm.

\paragraph{Proof of Corollary~\ref{samplingcor}:}
We now show that the distribution sampled from by the algorithm realizing Lemma~\ref{samplinglemma} minimizes the expected number of pairs of reviewers from each subset assigned to each paper. Consider any subset $\adinst$ and paper $\adpap$, and again let $X$ denote  a random variable that represents the number of reviewers from subset $\adinst$ on paper $\adpap$. The expected number of pairs of reviewers from subset $\adinst$ assigned to paper $\adpap$ is $\mathbb{E}\left[ {X \choose 2} \right] = \frac{1}{2} \mathbb{E}[ X^2 ] - \frac{1}{2} \mathbb{E}[X]$. Since $\mathbb{E}[X]$ is fixed for a given $\fracas$, we must only show that our chosen decomposition minimizes $\mathbb{E}[ X^2 ]$. 

Let $f$ be the probability mass function of $X$ under the distribution of $X$ produced by our sampling algorithm, so that $f(i) = P[X = i]$ for $i \in \{0, \dots, |\adinst|\}$. Let $f'$ be the probability mass function of $X$ under any different distribution produced by some sampling algorithm, so that $\exists i \in \{0, \dots, |\adinst|\}$ such that $f'(i) \neq f(i)$. Since both $f$ and $f'$ are produced by sampling algorithms, they must respect the marginal assignment probabilities given by $\fracas$.

First, assume that $\mathbb{E}[X] = \mu$ is integral. $\mathbb{E}[X] = \sum_{\adrev \in \adinst} \fracas_{\adrev\adpap}$, so $\mu$ is equal to the total load from subset $\adinst$ on paper $\adpap$. From Section~\ref{secinstsamplingalgo}, our sampling algorithm preserves exactly the loads from any subset on any paper that are originally integral, meaning that it will always assign exactly $\mu$ reviewers from subset $\adinst$ to paper $\adpap$. In other words, our sampling algorithm always gives the distribution of $X$ where $f(\mu) = 1$ and $f(i) = 0$ for $i \neq \mu$. Since all distributions of $X$ have the same expectation, $\sum_{i = 0}^{|\adinst|} f'(i) i = \mu$; we also know that $f'(i) > 0$ for some $i \neq \mu$. For this distribution, we have that
\begin{align*}
\mathbb{E}_{f'}[X^2] &= \sum_{i = 0}^{|\adinst|} f'(i) i^2 
= \sum_{\Delta = -\mu}^{|\adinst| - \mu} f'(\mu + \Delta) (\mu + \Delta)^2 
= \mu^2 + \sum_{\Delta = -\mu}^{|\adinst| - \mu} f'(\mu + \Delta) \Delta^2 
> \mu^2 = \mathbb{E}_{f}[X^2].
\end{align*}

Now, suppose that $\mathbb{E}[X] = \mu$ is not integral. From Section~\ref{secinstsamplingalgo}, our sampling algorithm rounds to a neighboring integer the loads from any subset on any paper that are originally not integral, meaning that it will always assign exactly $\muceil$ or $\mufloor$ reviewers from subset $\adinst$ to paper $\adpap$. In other words, our sampling algorithm only places probability mass on outcomes $X = \muceil$ or $X = \mufloor$, so $f(i) = 0$ for $i \not\in \{ \muceil, \mufloor \}$. There is only one way to do this so that $\mathbb{E}[X] = \mu$; exactly $f(\muceil) = \mu - \mufloor$ and $f(\mufloor) = \muceil - \mu$. Then under this distribution, via some algebraic simplifications,
\begin{align}
\mathbb{E}_{f}[X^2] &= f(\muceil) {\muceil}^2 + f(\mufloor) {\mufloor}^2 \nonumber \\
&=  -{\muceil}^2 + {\muceil} - \mu + 2{\muceil}\mu. \label{eqn:ourexp}
\end{align}
Under any other distribution of $X$ giving the probability mass function $f'$,
\begin{align}
&\mathbb{E}_{f'}[X^2] = \sum_{i = 0}^{|\adinst|} f'(i) i^2 \nonumber \\
&= \sum_{\Delta = -{\muceil}}^{|\adinst| - {\muceil}} \left( f'({\muceil} + \Delta) ({\muceil} + \Delta)^2 \right)  + 2 {\muceil} \sum_{\Delta = -{\muceil}}^{|\adinst| - {\muceil}} \left( f'({\muceil} + \Delta) \Delta \right)
+ \sum_{\Delta = -{\muceil}}^{|\adinst| - {\muceil}} \left( f'({\muceil} + \Delta) \Delta^2 \right) \nonumber \\
&= {\muceil}^2 + 2 {\muceil} (\mu - \muceil)
+ \sum_{\Delta = -{\muceil}}^{|\adinst| - {\muceil}} \left( f'({\muceil} + \Delta) \Delta^2\right).\label{eqn:otherexp}
\end{align}

We want to show that $\mathbb{E}_{f'}[X^2] > \mathbb{E}_{f}[X^2]$. From (\ref{eqn:ourexp}) and (\ref{eqn:otherexp}), it remains to show that
\begin{align*}
\sum_{i = 0}^{|\adinst|} f'(i) (i - {\muceil})^2 &> {\muceil} - \mu.
\end{align*}
Note that because $f'(i) \neq f(i)$ for some $i$, there exists some $j \not\in \{ \muceil, \mufloor \}$ such that $f'(j) > 0$. Further, $(i - {\muceil})^2 \geq ({\muceil} - i)$ for all integers $i$ and $(i - {\muceil})^2 > ({\muceil} - i)$ for all integers $i \not\in \{ \muceil, \mufloor \}$. Therefore,
\begin{align*}
\sum_{i = 0}^{|\adinst|} f'(i) (i - {\muceil})^2 &> \sum_{i = 0}^{|\adinst|} f'(i) ({\muceil} - i) = {\muceil} - \sum_{i = 0}^{|\adinst|} f'(i) i = {\muceil} - \mu.
\end{align*}
Therefore, $\mathbb{E}_{f'}[X^2] > \mathbb{E}_{f}[X^2]$ as desired, so $f$ is the probability mass function corresponding to the distribution of $X$ which minimizes $\mathbb{E}[X^2]$ (uniquely, since the inequality is strict). This concludes the proof that our algorithm minimizes $\mathbb{E}[X^2]$ and therefore minimizes the expected number of pairs from the same subset assigned to the same paper.

\section{Synthetic Simulations} \label{addexperiments}

\begin{figure*}[t!] 
    \centering
    \begin{subfigure}{0.75\textwidth}\includegraphics[width=1\textwidth]{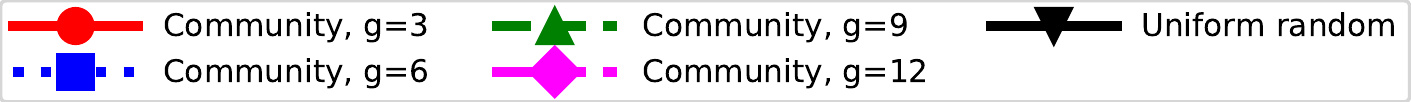}\label{figsimlegend} \end{subfigure} \\
    \begin{subfigure}{0.46\textwidth}\includegraphics[width=1\textwidth]{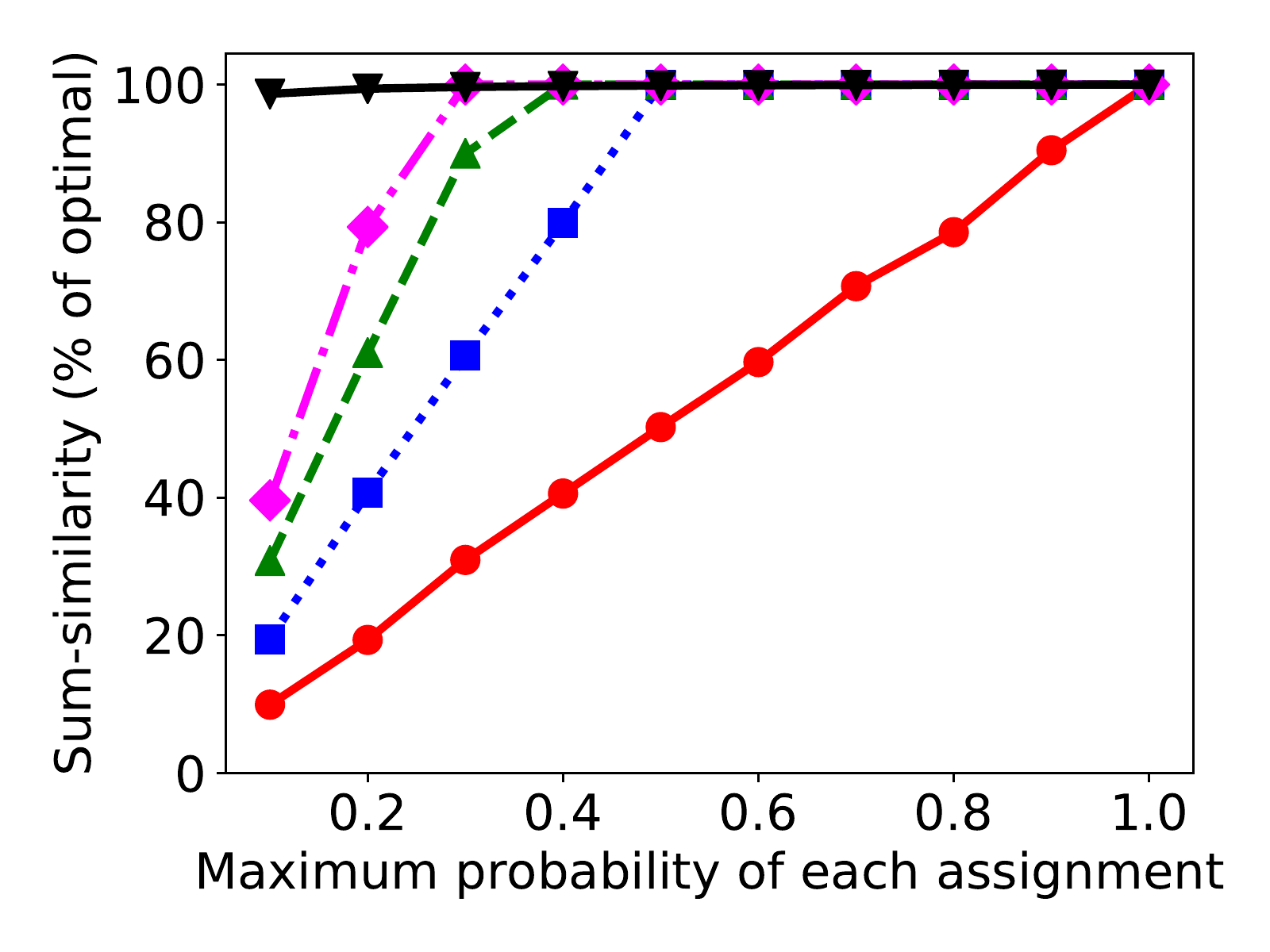}\caption{Pairwise-Constrained Problem}\label{figsimul} \end{subfigure}\hfill
    \begin{subfigure}{0.46\textwidth}\includegraphics[width=1\textwidth]{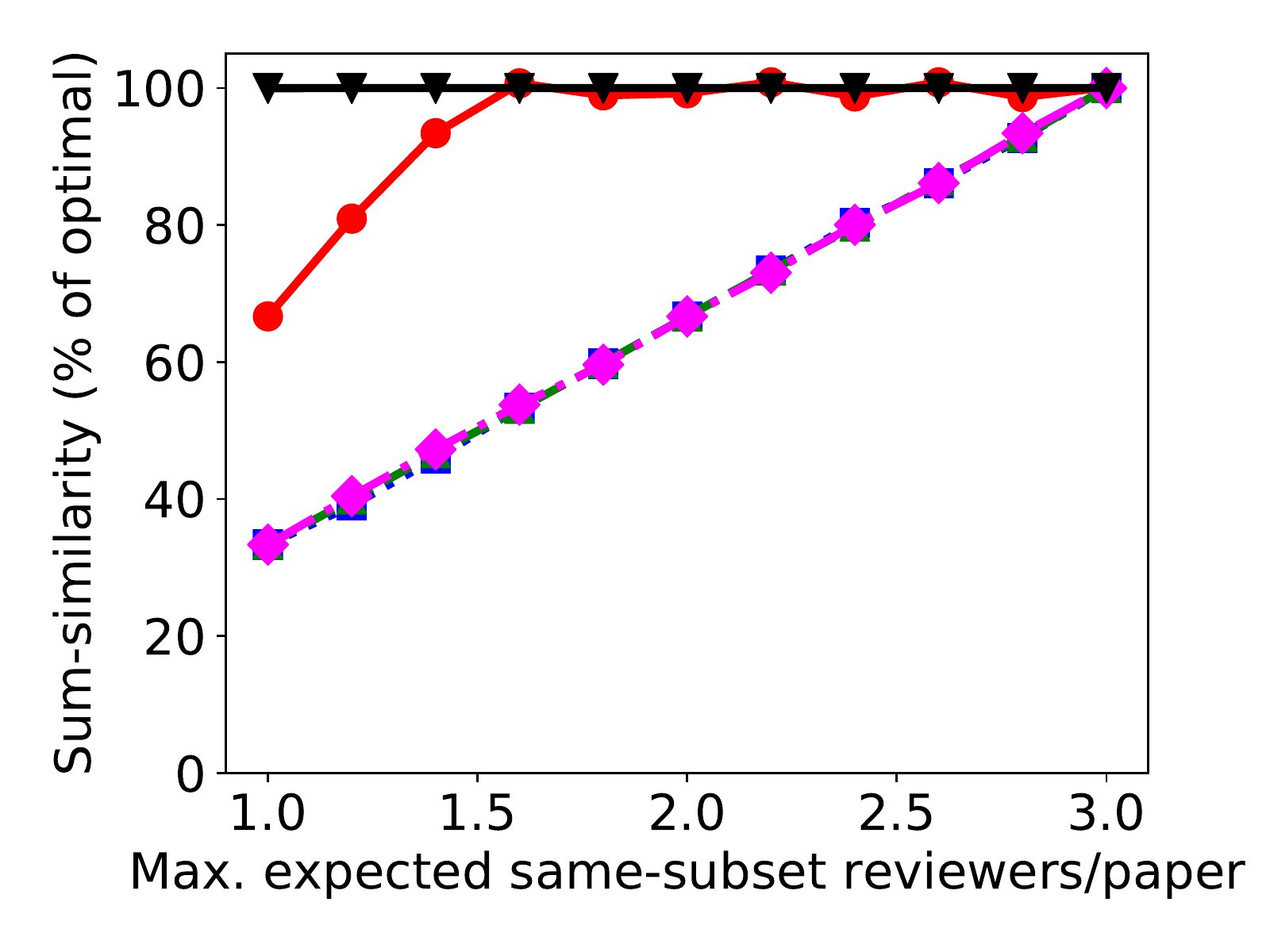}\caption{Partition-Constrained Problem}\label{figsimulinst} \end{subfigure} \\
    \begin{subfigure}{0.46\textwidth}\includegraphics[width=1\textwidth]{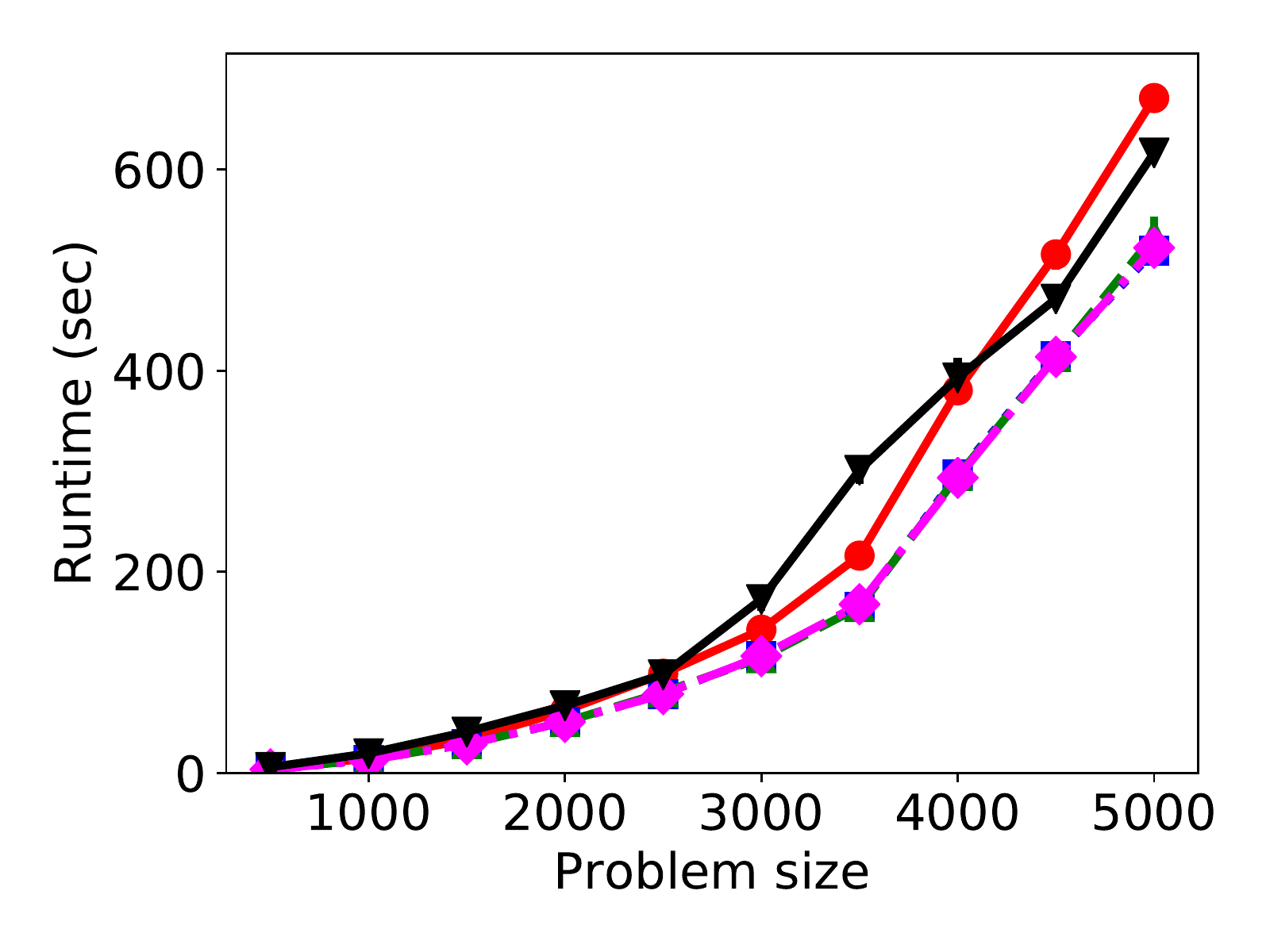}\caption{Runtime on Pairwise-Constrained Problem}\label{figruntime} \end{subfigure}
    \caption{Experimental results on synthetic simulations.} \label{figaddresults}
\end{figure*}

We now present experimental results on synthetic simulations. All results are averaged over $10$ trials with error bars plotted representing the standard error of the mean, although error bars are sometimes not visible since the variance is low. All experiments were run on a computer with $8$ cores and $16$ GB of RAM, running Ubuntu 18.04 and solving the LPs with Gurobi 9.0.2~\cite{gurobi}.

We consider two different simulations. First, we consider a simulated ``community model'' as used in past work \cite{fiez2020super}. In this model, $\numrev = \numpap =360$ and $\revload = \papload = 3$; it is further parameterized by a group size $\groupsize$. For all $i \in \{0, \groupsize, 2\groupsize, \dots, \numrev\}$, reviewers $i$ through $i + \groupsize -1$ have similarity $1$ with papers $i$ through $i+ \groupsize -1$ and similarity $0$ with all other papers. We consider four different group sizes $\groupsize$: 3, 6, 9, 12. We also consider a uniform random simulation, where each entry of the similarity matrix is independently and uniformly drawn from $[0, 1)$, fixing $\numrev = \numpap=1000$ and $\revload = \papload = 3$.

In Figure~\ref{figsimul}, we examine the performance of our algorithm for the Pairwise-Constrained Problem. For each simulation, we set all entries of $\maxprob$ to a constant $\maxprobconst$ and observe the sum-similarity as we vary $\maxprobconst$ (on the x-axis). The objective value is reported here as a percentage of the optimal unconstrained solution's objective, as was done in Section~\ref{secexps}. For the community models, the group size makes a large difference as to what an acceptable value of $\maxprobconst$ is. For example, with group size $6$ and $\maxprobconst = 0.5$, our algorithm will always assign all good reviewers to all papers; however, for any lower value of $\maxprobconst$ it can no longer do this and so the objective deteriorates rapidly. Note that since our algorithm is optimal, this deterioration is due to the problem being overconstrained for low values of $\maxprobconst$ and not due to an issue with the algorithm. For the uniform random simulation, our algorithm performs very well, since there are likely many reviewers with high similarity for each paper.

We also examine the performance of our algorithm for the Partition-Constrained Problem in Figure~\ref{figsimulinst}. For each simulation, we fix $\maxprobconst = 0.5$ and gradually loosen Constraint (\ref{cnt:inst}) in $\mathcal{LP}2$ by increasing the constant from $1$ to $3$ in increments of $0.2$, shown on the x-axis. We plot the sum-similarity objective of the resulting assignment, reported as a percentage of the optimal non-partition-constrained solution's objective (that is, the solution to the Pairwise-Constrained Problem with $\maxprobconst = 0.5$). For the community model simulations, we assign all reviewers in each group to the same subset of the partition. Since all of the reviewers who can review each paper well are in the same subset, this presents a highly constrained problem (which our algorithm is solving optimally). As expected, our algorithm trades off the number of same-subset reviewer pairs assigned to the same paper and the sum-similarity objective rather poorly (as would any other algorithm). Since $\maxprobconst = 0.5$, there is no difference between the cases with group size $6$ or greater. For the uniform random simulation, we assign random subsets of size $100$. Since there are likely many reviewers with high similarity for each paper in different subsets, our algorithm again performs very well.

In Figure~\ref{figruntime}, we show the runtime of our algorithm for the Pairwise-Constrained Problem on the various simulations, fixing $\maxprobconst = 0.5$ and varying $\numrev=\numpap$ on the x-axis. The runtime of our algorithm is similar across the different simulations. Our algorithm solves the uniform random simulation case with $\numrev = \numpap = 5000$ in just over $10$ minutes.

\end{document}